\documentclass[11pt,twoside]{article}

\usepackage[utf8]{inputenc}
\usepackage[T1]{fontenc}

\usepackage{graphicx}
\usepackage{amsmath,amssymb,amsthm,bm,mathtools}
\usepackage{hyperref}
\usepackage{natbib}
\usepackage{fancyhdr}
\usepackage{geometry}
\usepackage{titlesec}
\usepackage{lipsum}
\usepackage{orcidlink}
\usepackage{booktabs}
\usepackage{placeins}
\usepackage{float}
\usepackage[ruled,vlined,linesnumbered]{algorithm2e}

\setcitestyle{authoryear}
\newtheorem{theorem}{Theorem}[section]
\newtheorem{proposition}[theorem]{Proposition}
\newtheorem{lemma}[theorem]{Lemma}

\newtheorem{remark}[theorem]{Remark}

\hypersetup{
colorlinks=true,
linkcolor=blue,
filecolor=blue,
citecolor=blue,
urlcolor=black
}

\fancypagestyle{plain}{
\fancyhf{}

}

\title{
\vspace{-1.5em}
\hrule height 1.5pt
\vspace{0.8em}
StrADiff: A Structured Source-Wise Adaptive Diffusion Framework for Linear and Nonlinear Blind Source Separation
\vspace{0.8em}
\hrule height 1.5pt
\vspace{1em}
}

\author{%
\begin{minipage}[t]{.48\textwidth}\centering\small
  \textbf{Yuan-Hao Wei\orcidlink{0000-0001-9439-0780}}\\
  \texttt{Yuan-Hao.Wei@outlook.com; yuan-hao.wei@connect.polyu.hk; yuanhao.wei1993@gmail.com}
\end{minipage}
}

\date{}

\begin{document}
\maketitle
\thispagestyle{plain}

\begin{abstract}
This paper presents StrADiff, a Structured Source-Wise Adaptive Diffusion Framework for unsupervised blind source separation under linear and nonlinear mixing. The framework treats each latent dimension as a source branch and assigns to it an individual adaptive reverse diffusion mechanism, so that latent sources are recovered directly from observed mixtures through a single end-to-end objective, without supervised source labels or separate post-processing. Source-wise generation, structural regularization, and observation-space reconstruction are optimized jointly during training. In this instantiation, a Gaussian process (GP) prior is used as one example of a source-wise structured prior to impose temporal organization on each recovered trajectory; the framework itself is not restricted to GP priors and can in principle incorporate other structured priors. Theoretical components clarify the induced pushforward source law, the sample-level role of the structured prior, the coupling between source recovery and prior adaptation, and a conditional weak recovery statement in an idealized linear low-noise regime. Experiments on linear and nonlinear mixtures show that StrADiff can recover meaningful latent source trajectories in an unsupervised manner, with particularly stable performance in the linear case and moderate degradation under nonlinear mixing. Beyond classical signal separation, a source branch may also be interpreted as an independent, disentangled, or otherwise interpretable explanatory factor under suitable structural assumptions, suggesting a broader route toward structured latent modeling and future identifiable nonlinear representation learning.
\end{abstract}

\noindent\textbf{Keywords:} blind source separation (BSS); linear/nonlinear mixing; diffusion model; source-wise latent modeling; structured prior; Gaussian process (GP) prior; unsupervised source recovery; interpretable latent-variable modeling

\section{Introduction}

Recent advances in generative modeling have greatly expanded the ability of latent-variable models to represent complex observations through expressive nonlinear decoders and flexible stochastic generation mechanisms \cite{ho2020ddpm,song2021sde,rombach2022ldm}. Yet, in many scientific and engineering problems, generation quality alone is not the ultimate goal. One often hopes that different latent variables will correspond to different underlying factors, so that the learned representation is not merely expressive, but also structured, interpretable, and potentially identifiable. This broader objective has motivated growing interest in structured generative modeling, where latent variables are encouraged to carry distinct semantic, temporal, or dynamical roles rather than being absorbed into a single undifferentiated latent code \cite{hyvarinen2023nonlinearica,kivva2022identifiability}.

This issue is closely connected to the problem of disentanglement and nonlinear latent-variable recovery. In particular, nonlinear independent component analysis (ICA) has clarified that meaningful unsupervised recovery of latent components generally requires additional structure beyond unconstrained latent independence \cite{hyvarinen2019auxiliary,khemakhem2020vaeica,hyvarinen2023nonlinearica}. Temporal information, auxiliary variables, and latent dynamics have all been shown to provide routes toward more principled nonlinear source recovery and latent factor identification \cite{hyvarinen2017tcl,hyvarinen2019auxiliary,halva2020hmmnica}. More recently, identifiability results for deep generative models have further reinforced the importance of structural assumptions in latent-variable learning \cite{kivva2022identifiability,wang2025candisentangle}. From this perspective, blind source separation (BSS) can be viewed not only as a signal processing task, but also as a concrete testbed for studying how latent dimensions may be driven toward different interpretable roles under structured generative assumptions.

Among recent generative approaches, diffusion and score-based models are particularly attractive for this purpose. Foundational works such as DDPM and score-based generative modeling through stochastic differential equations established diffusion as a powerful framework for transforming simple noise distributions into complex data distributions through multi-step denoising dynamics \cite{ho2020ddpm,song2021sde}. Later developments, including latent diffusion, further showed that diffusion can be deployed in structured latent spaces rather than only in the raw observation space \cite{rombach2022ldm}. At the same time, a line of work has begun to investigate diffusion not just as a sampler, but as a representation learner. Diffusion autoencoders sought meaningful and decodable latent codes \cite{preechakul2022diffae}, denoising diffusion autoencoders were shown to learn useful self-supervised representations \cite{xiang2023ddae}, and bottleneck diffusion models further demonstrated that compact diffusion-guided representations can exhibit emergent semantic and even partially disentangled structure \cite{hudson2024soda}. These developments suggest that diffusion models need not be viewed purely as black-box generators; they can also serve as structured latent modeling tools.

A parallel line of research has explored diffusion models as priors for inverse problems. Score-based and diffusion posterior sampling methods have shown that diffusion models can act as powerful generative regularizers when the unknown quantity is constrained only indirectly through measurements \cite{song2022medicalinverse,chung2023dps}. This view has already influenced source separation. Deep generative priors were first used to separate sources in a Bayesian framework without requiring explicit source models in closed form \cite{jayaram2020deeppriors}. More recently, score-based source separation applied independently trained score priors to recover superimposed communication signals \cite{jayashankar2023scorebasedss}. Diffusion-based source separation has also been advanced in blind speech separation through ArrayDPS \cite{xu2025arraydps}, extended to multi-view source separation with learned diffusion priors \cite{wagnercarena2025prism}, and paralleled by related continuous-time generative formulations such as source separation by flow matching \cite{scheibler2025flowss}. These studies clearly show that diffusion-type generative priors are becoming a serious route for solving separation and inverse problems.

At the same time, diffusion models have also begun to enter the domain of disentangled and interpretable latent modeling more directly. DisDiff introduced unsupervised disentanglement within diffusion probabilistic models \cite{yang2023disdiff}, cross-attention-based diffusion was shown to provide a strong inductive bias for disentanglement \cite{yang2024crossattn}, and later work further improved latent-unit semantics within diffusion-based disentangled representation learning \cite{jun2025latentunits}. Theoretical analysis has also started to examine when and how diffusion models can disentangle latent variables under weak forms of supervision such as multiple views or partial labels \cite{wang2025candisentangle}. Taken together, these works indicate that diffusion models are increasingly relevant not only for generation and inverse problems, but also for the broader goal of learning structured, interpretable, and potentially identifiable latent representations.

Nevertheless, most existing diffusion-based formulations still use diffusion in a relatively global manner. Even when the downstream goal involves multiple sources or multiple latent factors, the generative mechanism is often shared across the latent representation, or the diffusion prior is imposed at the whole-source level and then used externally in posterior sampling \cite{jayashankar2023scorebasedss,xu2025arraydps,wagnercarena2025prism}. From the standpoint of structured latent modeling, this leaves a useful gap. If different latent dimensions are intended to represent different underlying components, then it is natural to ask whether each latent dimension should possess its own adaptive generative pathway and its own structural regularization, so that specialization can emerge directly within training. Such a design is especially appealing for temporally structured signals, where different latent components may exhibit clearly different dynamic scales or correlation patterns.

Motivated by this perspective, this paper proposes StrADiff, a structured source-wise adaptive diffusion framework in which each latent dimension is interpreted as one source component and assigned its own reverse diffusion branch. In the present implementation, each branch is further equipped with its own adaptive Gaussian process (GP) prior so that source-wise temporal structure can be imposed directly on the recovered latent trajectories. An explicit mixing or reconstruction map connects the recovered latent sources to the observations, yielding a unified end-to-end objective in which source-wise latent generation, source-specific structural regularization, and observation-space reconstruction are optimized jointly. Although the experiments in this paper are conducted on linear and nonlinear blind source separation problems, the role of BSS here is mainly methodological: it provides a clear and interpretable setting in which the behavior of source-wise structured latent diffusion can be examined concretely.

Under this view, the present work should be understood as a preliminary study toward broader structured generative modeling rather than as a separation method motivated only by BSS itself. Its significance lies in showing that diffusion-based latent generation can be organized in a source-wise manner, with different latent branches adapting toward different structured roles during unsupervised training. While GP priors are adopted here as one specific instantiation for temporally structured signals, the overall framework is not restricted to GP priors and can in principle be extended to other structured source priors. This makes the proposed formulation potentially relevant not only to blind source separation, but also to future research on interpretable latent-variable modeling, source-wise disentanglement, and identifiable nonlinear latent-variable learning under stronger structural assumptions.

\section{Methodology}

\subsection{Structured latent formulation for source-wise blind source separation}

This section specifies the observation-space fitting term. While the structured prior regularizes latent trajectories, the reconstruction model enforces consistency with the measured mixtures. Let
\begin{equation}
\mathbf{Y}=\left[\mathbf{y}_1,\ldots,\mathbf{y}_T\right]^\top\in\mathbb{R}^{T\times m}
\end{equation}
denote the observed mixture sequence of length $T$, where each $\mathbf{y}_t\in\mathbb{R}^{m}$ is an $m$-dimensional observation. The aim is to recover
\begin{equation}
\mathbf{S}=\left[\mathbf{s}_1,\ldots,\mathbf{s}_T\right]^\top\in\mathbb{R}^{T\times n},
\end{equation}
where $\mathbf{s}_t=(s_t^{(1)},\ldots,s_t^{(n)})^\top$ contains the $n$ latent source components at time index $t$.

The central modeling assumption is that each latent dimension corresponds to one source process. Accordingly, instead of assigning a single shared latent generator to the whole vector $\mathbf{s}_t$, a separate latent-generation mechanism is associated with each source dimension. Let
\begin{equation}
\mathbf{s}^{(k)}=\left[s_1^{(k)},\ldots,s_T^{(k)}\right]^\top\in\mathbb{R}^{T},
\qquad k=1,\ldots,n,
\end{equation}
denote the entire trajectory of the $k$-th source. The full source matrix may then be written equivalently as
\begin{equation}
\mathbf{S}=\left[\mathbf{s}^{(1)},\ldots,\mathbf{s}^{(n)}\right].
\end{equation}

This source-wise decomposition is the basic organizing principle of the model. Once the latent dimensions are explicitly aligned with source components, the remaining task is to specify how each source is generated and how the resulting source matrix explains the observed mixtures. To this end, the observation model is written through an explicit mixing map:
\begin{equation}
\widehat{\mathbf{Y}} = g_{\phi}(\mathbf{S}),
\label{eq:mixing-map}
\end{equation}
where $\phi$ denotes the mixing parameters. This formulation covers both linear and nonlinear separation settings within the same notation: when linear mixing is desired, $g_{\phi}$ can be chosen as a linear map; when nonlinear mixing is required, $g_{\phi}$ can be instantiated by a multilayer perceptron or another nonlinear parametrization. The reconstruction mechanism is therefore kept explicit, rather than absorbed into an implicit observation model.

\subsection{Source-wise latent diffusion generation}

The above formulation specifies how sources are mapped to observations, but it does not yet specify how the sources themselves are generated. This step is therefore to construct a source-wise latent generation mechanism that is compatible with the intended separation structure.

For each source $k$, a latent initial trajectory is introduced,
\begin{equation}
\mathbf{z}^{(k)}\in\mathbb{R}^{T},
\end{equation}
and a source-specific Gaussian distribution is assigned to it:
\begin{equation}
q\!\left(\mathbf{z}^{(k)}\right)
=
\mathcal{N}\!\left(
\mathbf{z}^{(k)};\,
\boldsymbol{\mu}^{(k)},
\operatorname{diag}\!\bigl(\boldsymbol{\sigma}^{2\,(k)}\bigr)
\right),
\label{eq:q-zk}
\end{equation}
where $\boldsymbol{\mu}^{(k)}\in\mathbb{R}^{T}$ and $\boldsymbol{\sigma}^{2\,(k)}\in\mathbb{R}^{T}_{+}$ are trainable source-wise parameters. Assuming independence across source dimensions at this initial latent level gives
\begin{equation}
q(\mathbf{Z})
=
\prod_{k=1}^{n}
q\!\left(\mathbf{z}^{(k)}\right),
\qquad
\mathbf{Z}=\left[\mathbf{z}^{(1)},\ldots,\mathbf{z}^{(n)}\right]\in\mathbb{R}^{T\times n}.
\label{eq:q-Z}
\end{equation}

A sample from \eqref{eq:q-zk} is obtained by reparameterization:
\begin{equation}
\mathbf{z}^{(k)}
=
\boldsymbol{\mu}^{(k)}
+
\boldsymbol{\sigma}^{(k)}\odot \boldsymbol{\epsilon}^{(k)},
\qquad
\boldsymbol{\epsilon}^{(k)}\sim\mathcal{N}(\mathbf{0},\mathbf{I}),
\label{eq:reparam-z}
\end{equation}
where $\odot$ denotes elementwise multiplication.

To obtain the actual latent sources, each sampled $\mathbf{z}^{(k)}$ is further transformed through a dedicated reverse diffusion process. Let the variance-preserving schedule be given by
\begin{equation}
\{\beta_{\tau}\}_{\tau=1}^{L},\qquad
\alpha_{\tau}=1-\beta_{\tau},\qquad
\bar{\alpha}_{\tau}=\prod_{r=1}^{\tau}\alpha_r,
\end{equation}
where $L$ is the number of reverse steps. For notational convenience, define
\begin{equation}
\mathbf{x}^{(k)}_{L} := \mathbf{z}^{(k)}.
\end{equation}
Following the standard notation in diffusion models, the subscript indicates the noise level rather than the chronological order of generation: $\mathbf{x}_0$ represents the clean signal, while $\mathbf{x}_L$ represents the maximally noised state after $L$ diffusion steps. Therefore, the generative process is carried out in the reverse direction, starting from $\mathbf{x}_L$ and ending at $\mathbf{x}_0$.

A source-specific $\epsilon$-network
\begin{equation}
\epsilon_{\theta_k}:\mathbb{R}^{T}\times[0,1]\rightarrow\mathbb{R}^{T}
\end{equation}
is then used to carry out the reverse trajectory. At step $\tau$, the implied clean estimate is
\begin{equation}
\widehat{\mathbf{x}}_{0,\tau}^{(k)}
=
\frac{
\mathbf{x}_{\tau}^{(k)}
-
\sqrt{1-\bar{\alpha}_{\tau}}\,
\epsilon_{\theta_k}\!\left(\mathbf{x}_{\tau}^{(k)}, \tau/L\right)
}{
\sqrt{\bar{\alpha}_{\tau}}+\varepsilon
},
\label{eq:x0-est}
\end{equation}
followed by the deterministic DDIM-style reverse update
\begin{equation}
\mathbf{x}_{\tau-1}^{(k)}
=
\sqrt{\bar{\alpha}_{\tau-1}}\,
\widehat{\mathbf{x}}_{0,\tau}^{(k)}
+
\sqrt{1-\bar{\alpha}_{\tau-1}}\,
\epsilon_{\theta_k}\!\left(\mathbf{x}_{\tau}^{(k)}, \tau/L\right),
\qquad
\tau=L,\ldots,1.
\label{eq:reverse-update}
\end{equation}
The recovered source trajectory is then defined by the terminal state
\begin{equation}
\mathbf{s}^{(k)}:=\mathbf{x}_{0}^{(k)}.
\label{eq:source-terminal}
\end{equation}

Collecting all source dimensions yields the overall latent generation map
\begin{equation}
\mathbf{S}
=
f_{\Theta}(\mathbf{Z})
=
\left[
f_{\theta_1}\!\left(\mathbf{z}^{(1)}\right),
\ldots,
f_{\theta_n}\!\left(\mathbf{z}^{(n)}\right)
\right],
\label{eq:source-wise-diffusion-map}
\end{equation}
where $\Theta=\{\theta_1,\ldots,\theta_n\}$. Hence, the actual sources supplied to the mixing map are not free trainable trajectories; they are the outputs of source-wise reverse diffusion starting from source-wise Gaussian latent variables.

\begin{proposition}[Pushforward law of the source-wise reverse diffusion output]
\label{prop:pushforward-law}
Let
\begin{equation}
F_{\Theta}(\mathbf{Z})
=
\left[
f_{\theta_1}\!\left(\mathbf{z}^{(1)}\right),
\ldots,
f_{\theta_n}\!\left(\mathbf{z}^{(n)}\right)
\right]
\end{equation}
denote the source-wise reverse diffusion map defined by \eqref{eq:source-wise-diffusion-map}, and assume that each branch map $f_{\theta_k}:\mathbb{R}^{T}\to\mathbb{R}^{T}$ is Borel measurable. Then the Gaussian starting law $q(\mathbf{Z})$ in \eqref{eq:q-Z} induces a well-defined output law
\begin{equation}
P_{\mathbf{S}}
=
(F_{\Theta})_{\#}q
\end{equation}
on $\mathbb{R}^{T\times n}$, where for every Borel set $A\subseteq\mathbb{R}^{T\times n}$,
\begin{equation}
P_{\mathbf{S}}(A)
=
q\!\left(F_{\Theta}^{-1}(A)\right).
\end{equation}

Moreover, if $F_{\Theta}$ is bijective and differentiable with differentiable inverse, then $P_{\mathbf{S}}$ admits the density
\begin{equation}
p_{\mathbf{S}}(\mathbf{S})
=
q\!\left(F_{\Theta}^{-1}(\mathbf{S})\right)
\left|
\det J_{F_{\Theta}^{-1}}(\mathbf{S})
\right|.
\end{equation}
In the present model, however, the reverse diffusion map is implemented by a multi-step neural transformation for which global invertibility and tractable Jacobian determinants are not assumed. Therefore the output-level density $p_{\mathbf{S}}$ is generally not available in closed form, and an explicit KL divergence between the diffusion output law and the GP prior is not directly tractable.
\end{proposition}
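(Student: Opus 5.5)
The plan has three parts: measurability of the product map, the pushforward construction, and the change-of-variables formula. The final sentence of the statement is a remark, not a claim to be proved.

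\emph{Measurability.} First I show that $F_{\Theta}$ is Borel measurable as a map $\mathbb{R}^{T\times n}\to\mathbb{R}^{T\times n}$. Identify $\mathbb{R}^{T\times n}$ with the product $\prod_{k=1}^n\mathbb{R}^T$. Because $\mathbb{R}^T$ is second countable, the Borel $\sigma$-algebra of this product equals the product of the Borel $\sigma$-algebras, and it is generated by measurable rectangles $A_1\times\cdots\times A_n$. The preimage of such a rectangle is
\[
F_{\Theta}^{-1}(A_1\times\cdots\times A_n)=\{\mathbf{Z}:\ f_{\theta_k}(\mathbf{z}^{(k)})\in A_k\ \forall k\}=\bigcap_{k=1}^n \pi_k^{-1}\bigl(f_{\theta_k}^{-1}(A_k)\bigr),
\]
where $\pi_k$ is the (continuous) $k$-th column projection. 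Each set in this intersection is Borel, since the $f_{\theta_k}$ are assumed Borel. The class of sets with Borel preimage is a $\sigma$-algebra containing the generators, so $F_{\Theta}$ is measurable.

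\emph{Pushforward law.} The law $q$ in \eqref{eq:q-Z} is a product of nondegenerate Gaussians, hence a Borel probability measure. I define $P_{\mathbf{S}}(A):=q(F_{\Theta}^{-1}(A))$. Since preimages commute with complements and countable disjoint unions, countable additivity of $q$ transfers to $P_{\mathbf{S}}$, and $P_{\mathbf{S}}(\mathbb{R}^{T\times n})=1$. So $P_{\mathbf{S}}$ is well defined, and it is exactly the law of $\mathbf{S}=F_{\Theta}(\mathbf{Z})$ with $\mathbf{Z}\sim q$.

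\emph{Density.} Assume $F_{\Theta}$ is a $C^1$ diffeomorphism, which is how I read ``bijective, differentiable, with differentiable inverse''. If only differentiability is given, I would ask for $C^1$, or use the version of the change-of-variables theorem for differentiable injective maps. Write $G=F_{\Theta}^{-1}$. For Borel $A$,
\[
P_{\mathbf{S}}(A)=\int_{G(A)}q(\mathbf{Z})\,d\mathbf{Z}=\int_A q(G(\mathbf{S}))\,|\det J_G(\mathbf{S})|\,d\mathbf{S},
\]
by the multivariate change-of-variables formula applied with $\mathbf{Z}=G(\mathbf{S})$. This exhibits the claimed density. The only point needing care is the regularity hypothesis just discussed.

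The final paragraph of the statement needs no proof. Without invertibility the map may be many-to-one, so the density would be a sum or integral over fibres. It may also fail to exist entirely when $P_{\mathbf{S}}$ concentrates on a lower-dimensional set. Either way no closed form, and hence no explicit KL divergence to the GP prior, is available in general.
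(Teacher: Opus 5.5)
Your proposal is correct and follows the same route as the paper: take the image measure of $q$ under the measurable map $F_{\Theta}$, apply the classical change-of-variables theorem in the bijective differentiable case, and treat the final paragraph as an explanatory remark. Your argument is also more complete than the paper's. The paper simply asserts that $F_{\Theta}$ is measurable, whereas you derive this from branchwise Borel measurability via the product $\sigma$-algebra on $\prod_{k}\mathbb{R}^{T}$, and you make explicit the regularity needed for the density formula.
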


\noindent\textit{Proof.}
Since $F_{\Theta}$ is measurable, the image measure of $q$ under $F_{\Theta}$ is well defined. This gives the pushforward law
\begin{equation}
P_{\mathbf{S}}(A)=q(F_{\Theta}^{-1}(A))
\end{equation}
for all Borel measurable sets $A$. If $F_{\Theta}$ is additionally bijective and differentiable with differentiable inverse, then the classical change-of-variables theorem yields the density formula. In the present framework, the source-wise reverse diffusion generator is used as a sampled multi-step map rather than as a globally invertible normalizing flow. Hence the pushforward law exists in the measure-theoretic sense, but its density is not assumed tractable. This is why the GP term is imposed through sample-level log-density evaluation rather than an output-level KL divergence. \hfill$\square$

\subsection{Source-wise structured GP prior}

To introduce explicit temporal organization into the latent source trajectories, a structured prior is imposed independently on each source. The proposed framework is not restricted to a particular prior family; in principle, other source-wise structured priors, such as autoregressive, hidden Markov, mixture, flow-based, or energy-based priors, may be used. In the present instantiation, we adopt a Gaussian process prior because it provides a simple and differentiable way to encode temporal smoothness through source-specific length-scales.

Let $\mathbf{t}=[t_1,\ldots,t_T]^\top$ be the normalized time grid. For the $k$-th source, define
\begin{equation}
\mathbf{s}^{(k)} \sim \mathcal{N}(\mathbf{0},\mathbf{K}^{(k)}),
\label{eq:gp-prior}
\end{equation}
with covariance entries
\begin{equation}
K_{ij}^{(k)}
=
\sigma_f^2
\exp\!\left(
-\frac{(t_i-t_j)^2}{2\ell_k^2}
\right)
+
\xi\,\delta_{ij},
\qquad i,j=1,\ldots,T,
\label{eq:kernel}
\end{equation}
where $\ell_k>0$ is the source-specific length-scale, $\sigma_f^2$ is the kernel scale, $\xi>0$ is a jitter coefficient, and $\delta_{ij}$ is the Kronecker delta. Since $\xi>0$, the covariance matrix $\mathbf{K}^{(k)}$ is strictly positive definite for finite $T$, so $\mathbf{K}^{(k)-1}$ and $\log|\mathbf{K}^{(k)}|$ are well defined. In implementation, the positivity of $\ell_k$ is enforced by the parametrization
\begin{equation}
\ell_k=\exp(\gamma_k)+10^{-6},
\end{equation}
where $\gamma_k$ is an unconstrained trainable parameter.

Under source-wise independence, the prior factorizes as
\begin{equation}
p(\mathbf{S})
=
\prod_{k=1}^{n}
p\!\left(\mathbf{s}^{(k)}\right)
=
\prod_{k=1}^{n}
\mathcal{N}\!\left(\mathbf{s}^{(k)};\mathbf{0},\mathbf{K}^{(k)}\right).
\label{eq:joint-prior}
\end{equation}
Taking logarithms gives
\begin{align}
\log p(\mathbf{S})
&=
\sum_{k=1}^{n}
\log p\!\left(\mathbf{s}^{(k)}\right)
\nonumber\\
&=
-\frac{1}{2}\sum_{k=1}^{n}
\left[
T\log(2\pi)
+
\log\left|\mathbf{K}^{(k)}\right|
+
\mathbf{s}^{(k)\top}\mathbf{K}^{(k)-1}\mathbf{s}^{(k)}
\right].
\label{eq:gp-logprob-expanded}
\end{align}
Accordingly, the source-wise structured-prior penalty used in the training objective is the normalized negative log-density
\begin{equation}
\mathcal{L}_{\mathrm{prior}}
=
-\frac{1}{Tn}\log p(\mathbf{S})
=
\frac{1}{2Tn}
\sum_{k=1}^{n}
\left[
T\log(2\pi)
+
\log\left|\mathbf{K}^{(k)}\right|
+
\mathbf{s}^{(k)\top}\mathbf{K}^{(k)-1}\mathbf{s}^{(k)}
\right].
\label{eq:l-prior}
\end{equation}
The normalization by $Tn$ keeps the scale of the GP term comparable to the reconstruction and denoising losses and matches the implementation.

\begin{lemma}[Exact gradients of the source-wise GP penalty]
\label{lem:gp-gradients}
For the $k$-th source, define the unnormalized GP contribution
\begin{equation}
\mathcal{E}^{(k)}_{\mathrm{GP}}
=
\frac{1}{2}
\left[
\log\left|\mathbf{K}^{(k)}\right|
+
\mathbf{s}^{(k)\top}\mathbf{K}^{(k)-1}\mathbf{s}^{(k)}
\right],
\end{equation}
up to the additive constant $\frac{T}{2}\log(2\pi)$. Then the gradient with respect to the source trajectory is
\begin{equation}
\frac{\partial \mathcal{E}^{(k)}_{\mathrm{GP}}}{\partial \mathbf{s}^{(k)}}
=
\mathbf{K}^{(k)-1}\mathbf{s}^{(k)}.
\label{eq:gp-grad-s}
\end{equation}
More generally, for any scalar hyperparameter $\rho$ appearing in $\mathbf{K}^{(k)}$,
\begin{equation}
\frac{\partial \mathcal{E}^{(k)}_{\mathrm{GP}}}{\partial \rho}
=
\frac{1}{2}
\operatorname{tr}
\!\left(
\mathbf{K}^{(k)-1}
\frac{\partial \mathbf{K}^{(k)}}{\partial \rho}
\right)
-
\frac{1}{2}
\mathbf{s}^{(k)\top}
\mathbf{K}^{(k)-1}
\frac{\partial \mathbf{K}^{(k)}}{\partial \rho}
\mathbf{K}^{(k)-1}
\mathbf{s}^{(k)}.
\label{eq:gp-grad-rho}
\end{equation}
In particular, for the squared-exponential kernel in \eqref{eq:kernel},
\begin{equation}
\frac{\partial K_{ij}^{(k)}}{\partial \ell_k}
=
\sigma_f^2
\exp\!\left(
-\frac{(t_i-t_j)^2}{2\ell_k^2}
\right)
\frac{(t_i-t_j)^2}{\ell_k^3},
\qquad i,j=1,\ldots,T.
\label{eq:kernel-grad-ell}
\end{equation}
For the normalized loss in \eqref{eq:l-prior}, these gradients are multiplied by the constant factor $1/(Tn)$.
\end{lemma}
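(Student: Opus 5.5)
The plan is a direct matrix-calculus argument that treats the two terms of $\mathcal{E}^{(k)}_{\mathrm{GP}}$ separately. It uses only that $\mathbf{K}^{(k)}$ is symmetric and strictly positive definite for $\xi>0$, as noted after \eqref{eq:kernel}. This guarantees that $\mathbf{K}^{(k)-1}$ exists and is symmetric, and that $\log|\mathbf{K}^{(k)}|$ is smooth in any hyperparameter entering the kernel smoothly.

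\emph{Gradient in the trajectory.} For \eqref{eq:gp-grad-s}, the log-determinant does not depend on $\mathbf{s}^{(k)}$. The quadratic form $\mathbf{s}^\top\mathbf{A}\mathbf{s}$ has gradient $(\mathbf{A}+\mathbf{A}^\top)\mathbf{s}$, which equals $2\mathbf{A}\mathbf{s}$ when $\mathbf{A}=\mathbf{K}^{(k)-1}$ is symmetric. Multiplying by the prefactor $\tfrac12$ then gives $\mathbf{K}^{(k)-1}\mathbf{s}^{(k)}$.

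\emph{Gradient in a hyperparameter.} For \eqref{eq:gp-grad-rho}, two standard identities are needed for a differentiable family of invertible matrices.
\begin{itemize}
\item Jacobi's formula, $\partial_\rho\log|\mathbf{K}|=\operatorname{tr}(\mathbf{K}^{-1}\partial_\rho\mathbf{K})$. It follows from $\partial_\rho|\mathbf{K}|=|\mathbf{K}|\operatorname{tr}(\mathbf{K}^{-1}\partial_\rho\mathbf{K})$. The logarithm is legitimate because $|\mathbf{K}|>0$ by positive definiteness.
\item The inverse-derivative identity $\partial_\rho\mathbf{K}^{-1}=-\mathbf{K}^{-1}(\partial_\rho\mathbf{K})\mathbf{K}^{-1}$. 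It is obtained by differentiating $\mathbf{K}\mathbf{K}^{-1}=\mathbf{I}$ with the product rule.
\end{itemize}
Since $\mathbf{s}^{(k)}$ is held fixed when differentiating in $\rho$, substituting both identities and multiplying by $\tfrac12$ gives \eqref{eq:gp-grad-rho}.

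\emph{Length-scale derivative and normalization.} For \eqref{eq:kernel-grad-ell}, differentiate \eqref{eq:kernel} entrywise in $\ell_k$. The jitter term $\xi\delta_{ij}$ is constant. For the exponential, the chain rule gives
\begin{equation*}
\partial_{\ell}\bigl(-(t_i-t_j)^2/(2\ell^2)\bigr)=(t_i-t_j)^2/\ell^3 .
\end{equation*}
Finally, the normalized loss \eqref{eq:l-prior} equals $\frac{1}{Tn}\sum_k\mathcal{E}^{(k)}_{\mathrm{GP}}$ plus a constant. Source $k$'s trajectory and $\ell_k$ appear only in the $k$-th summand, so each gradient is just scaled by $1/(Tn)$. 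If one differentiates with respect to $\gamma_k$ instead, an extra factor $\exp(\gamma_k)$ appears by the chain rule, but the lemma is stated in terms of $\ell_k$.

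\emph{Main obstacle.} There is no real obstacle. The only point requiring care is justifying Jacobi's formula and the differentiability of the inverse, and both rest on $\mathbf{K}^{(k)}$ staying invertible, which $\xi>0$ ensures. I would prove Jacobi's formula by a first-order expansion: $|\mathbf{K}+h\mathbf{E}|=|\mathbf{K}|\,|\mathbf{I}+h\mathbf{K}^{-1}\mathbf{E}|=|\mathbf{K}|(1+h\operatorname{tr}(\mathbf{K}^{-1}\mathbf{E})+O(h^2))$.
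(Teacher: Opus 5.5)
Your proposal is correct and follows essentially the same route as the paper: differentiate the quadratic form directly, use Jacobi's formula together with $\partial_\rho\mathbf{K}^{-1}=-\mathbf{K}^{-1}(\partial_\rho\mathbf{K})\mathbf{K}^{-1}$ for the hyperparameter gradient, and differentiate the kernel entrywise in $\ell_k$. You go slightly further than the paper by deriving the two matrix identities and by checking the $1/(Tn)$ scaling explicitly (the paper cites the identities as standard and does not argue the scaling).
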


\noindent\textit{Proof.}
Because $\mathbf{K}^{(k)}$ does not depend on $\mathbf{s}^{(k)}$, differentiating the quadratic term directly gives
\begin{equation}
\frac{\partial}{\partial \mathbf{s}^{(k)}}
\left(
\frac{1}{2}\mathbf{s}^{(k)\top}\mathbf{K}^{(k)-1}\mathbf{s}^{(k)}
\right)
=
\mathbf{K}^{(k)-1}\mathbf{s}^{(k)},
\end{equation}
which proves \eqref{eq:gp-grad-s}. For a scalar hyperparameter $\rho$, use the matrix derivative identities
\begin{equation}
\frac{\partial}{\partial \rho}\log|\mathbf{K}^{(k)}|
=
\operatorname{tr}
\!\left(
\mathbf{K}^{(k)-1}
\frac{\partial \mathbf{K}^{(k)}}{\partial \rho}
\right)
\end{equation}
and
\begin{equation}
\frac{\partial \mathbf{K}^{(k)-1}}{\partial \rho}
=
-\mathbf{K}^{(k)-1}
\frac{\partial \mathbf{K}^{(k)}}{\partial \rho}
\mathbf{K}^{(k)-1},
\end{equation}
which together yield \eqref{eq:gp-grad-rho}. Finally, differentiating \eqref{eq:kernel} with respect to $\ell_k$ gives \eqref{eq:kernel-grad-ell}. \hfill$\square$

Lemma~\ref{lem:gp-gradients} makes the role of the structured prior more explicit. The gradient with respect to the source trajectory is a source-wise precision-weighted restoring term, while the gradient with respect to the length-scale balances the quadratic fit term against the complexity term induced by the log-determinant. Therefore source recovery and prior adaptation are coupled not only conceptually but also through exact gradient interactions.

The role of \eqref{eq:l-prior} is twofold. The quadratic form encourages each recovered source to lie in a source-specific temporally structured region, while the log-determinant term penalizes degenerate covariance settings and couples source recovery with prior adaptation through the learnable length-scales $\{\ell_k\}_{k=1}^{n}$.

The GP term therefore acts in the latent source space by evaluating whether the recovered source trajectories are compatible with source-specific temporal structure. This is complementary to the observation model, which operates in the data space and evaluates whether the recovered sources can explain the measured mixtures. Proposition~\ref{prop:pushforward-law} further clarifies the probabilistic status of the source-wise reverse diffusion output. The learned Gaussian starting law does induce a well-defined output law through the source-wise reverse diffusion map, but the resulting output density is not assumed tractable in closed form. For this reason, the present framework does not impose an output-level KL divergence between the diffusion output and the GP prior. Instead, the structured prior is enforced at the sample-trajectory level by directly evaluating the GP log-density of the recovered trajectory itself.

\subsection{Reconstruction model and data fidelity term}

Given the recovered sources $\mathbf{S}$, the reconstructed observations are obtained by
\begin{equation}
\widehat{\mathbf{Y}} = g_{\phi}(\mathbf{S}).
\end{equation}
Let $\widetilde{\mathbf{Y}}$ denote the centered and scaled version of the observed mixtures. A Gaussian reconstruction model with fixed variance coefficient $\nu_y$ leads to
\begin{equation}
p(\widetilde{\mathbf{Y}}\mid \mathbf{S})
\propto
\exp\!\left(
-\frac{1}{2\nu_y}
\|\widetilde{\mathbf{Y}}-g_{\phi}(\mathbf{S})\|_F^2
\right).
\end{equation}
In implementation, the data-fidelity term is normalized by the number of observed entries. Thus, up to an additive constant independent of the optimized variables, the reconstruction loss is taken as
\begin{equation}
\mathcal{L}_{\mathrm{rec}}
=
\frac{1}{2\nu_y Tm}
\|
\widetilde{\mathbf{Y}}-\widehat{\mathbf{Y}}
\|_F^2,
\label{eq:l-rec}
\end{equation}
where
\begin{equation}
\|\widetilde{\mathbf{Y}}-\widehat{\mathbf{Y}}\|_F^2
=
\sum_{t=1}^{T}\sum_{j=1}^{m}(\widetilde{Y}_{tj}-\widehat{Y}_{tj})^2
\end{equation}
denotes the total squared reconstruction error.

Here $\nu_y$ is a hyperparameter acting as an explicit weighting factor on the reconstruction term, corresponding to $\mathcal{H}$ in \cite{wei2024innovative}. In this sense, its role is analogous to the balancing effect of the coefficient used in $\beta$-VAE formulations \cite{burgess2018understanding}.

This term directly couples source generation and mixture fitting: the source trajectories produced by the source-wise reverse diffusion mechanism are accepted only insofar as they can be remixed through $g_{\phi}$ to explain the observations.

\subsection{Diffusion denoising objective}

To train the source-wise reverse diffusion networks, an $\epsilon$-prediction loss is imposed on the recovered source trajectories. For a randomly sampled step $\tau\in\{1,\ldots,L\}$ and source $k$, the forward noising relation is
\begin{equation}
\mathbf{x}_{\tau}^{(k)}
=
\sqrt{\bar{\alpha}_{\tau}}\,\mathbf{s}^{(k)}
+
\sqrt{1-\bar{\alpha}_{\tau}}\,\boldsymbol{\eta}^{(k)},
\qquad
\boldsymbol{\eta}^{(k)}\sim\mathcal{N}(\mathbf{0},\mathbf{I}).
\label{eq:forward-diff}
\end{equation}
The per-source denoising objective is then
\begin{equation}
\mathcal{L}_{\mathrm{diff}}^{(k)}
=
\mathbb{E}_{\tau,\boldsymbol{\eta}^{(k)}}
\left[
\frac{1}{T}
\left\|
\epsilon_{\theta_k}\!\left(\mathbf{x}_{\tau}^{(k)},\tau/L\right)
-
\boldsymbol{\eta}^{(k)}
\right\|_2^2
\right].
\label{eq:l-diff-k}
\end{equation}
Averaging over sources gives
\begin{equation}
\mathcal{L}_{\mathrm{diff}}
=
\frac{1}{n}\sum_{k=1}^{n}\mathcal{L}_{\mathrm{diff}}^{(k)}.
\label{eq:l-diff}
\end{equation}

Note that \eqref{eq:l-diff} is not imposed on an externally fixed target source. Instead, it is imposed on the very latent source trajectories that are used by the reconstruction path. In this sense, source estimation and diffusion learning are coupled within the same optimization loop.

\begin{proposition}[Unbiased single-step estimator of the finite-step denoising objective]
\label{prop:unbiased-diff}
For the $k$-th source, define the full finite-step denoising objective
\begin{equation}
\overline{\mathcal{L}}_{\mathrm{diff}}^{(k)}
=
\frac{1}{L}
\sum_{\tau=1}^{L}
\mathbb{E}_{\boldsymbol{\eta}^{(k)}}
\left[
\frac{1}{T}
\left\|
\epsilon_{\theta_k}\!\left(
\sqrt{\bar{\alpha}_{\tau}}\,\mathbf{s}^{(k)}
+
\sqrt{1-\bar{\alpha}_{\tau}}\,\boldsymbol{\eta}^{(k)},
\tau/L
\right)
-
\boldsymbol{\eta}^{(k)}
\right\|_2^2
\right].
\label{eq:l-diff-full}
\end{equation}
Conditioned on the current recovered source trajectory $\mathbf{s}^{(k)}$, if $\tau$ is sampled uniformly from $\{1,\ldots,L\}$, then the stochastic loss
\begin{equation}
\widehat{\mathcal{L}}_{\mathrm{diff}}^{(k)}
=
\frac{1}{T}
\left\|
\epsilon_{\theta_k}\!\left(\mathbf{x}_{\tau}^{(k)},\tau/L\right)
-
\boldsymbol{\eta}^{(k)}
\right\|_2^2,
\qquad
\mathbf{x}_{\tau}^{(k)}
=
\sqrt{\bar{\alpha}_{\tau}}\,\mathbf{s}^{(k)}
+
\sqrt{1-\bar{\alpha}_{\tau}}\,\boldsymbol{\eta}^{(k)},
\end{equation}
is an unbiased estimator of \eqref{eq:l-diff-full}, that is,
\begin{equation}
\mathbb{E}_{\tau,\boldsymbol{\eta}^{(k)}}
\left[
\widehat{\mathcal{L}}_{\mathrm{diff}}^{(k)}
\mid \mathbf{s}^{(k)}
\right]
=
\overline{\mathcal{L}}_{\mathrm{diff}}^{(k)}.
\end{equation}
Consequently, the source-averaged objective in \eqref{eq:l-diff} is an unbiased estimator of the corresponding full source-wise finite-step denoising objective.
\end{proposition}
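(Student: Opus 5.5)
The plan is to apply the tower property: condition on $\tau$ first, then average over the uniform law of $\tau$. Throughout, $\mathbf{s}^{(k)}$ and the network parameters $\theta_k$ are held fixed. Concretely, I define for each $\tau\in\{1,\ldots,L\}$ the per-step quantity
\begin{equation*}
h_\tau(\mathbf{s}^{(k)})
=
\mathbb{E}_{\boldsymbol{\eta}^{(k)}}
\left[
\frac{1}{T}
\left\|
\epsilon_{\theta_k}\!\left(\sqrt{\bar{\alpha}_{\tau}}\,\mathbf{s}^{(k)}+\sqrt{1-\bar{\alpha}_{\tau}}\,\boldsymbol{\eta}^{(k)},\tau/L\right)
-\boldsymbol{\eta}^{(k)}
\right\|_2^2
\right].
\end{equation*}
By definition, \eqref{eq:l-diff-full} is then exactly $\overline{\mathcal{L}}_{\mathrm{diff}}^{(k)}=\frac{1}{L}\sum_{\tau}h_\tau(\mathbf{s}^{(k)})$.

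\textbf{Step 1 (the estimator is well defined).} I first check that the nonnegative random variable $\widehat{\mathcal{L}}_{\mathrm{diff}}^{(k)}$ has an expectation, using the measurability assumption on $\epsilon_{\theta_k}$ as in Proposition~\ref{prop:pushforward-law}. Because the integrand is nonnegative, Tonelli's theorem justifies iterated expectations without any further integrability condition. Any infinite values would then appear identically on both sides, so the identity still holds. The noise $\boldsymbol{\eta}^{(k)}\sim\mathcal{N}(\mathbf{0},\mathbf{I})$ is drawn independently of $\tau$ and of $\mathbf{s}^{(k)}$.

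\textbf{Step 2 (condition on $\tau$).} Conditioning on $\tau$ and $\mathbf{s}^{(k)}$, independence lets me integrate over $\boldsymbol{\eta}^{(k)}$ alone, which gives $\mathbb{E}[\widehat{\mathcal{L}}_{\mathrm{diff}}^{(k)}\mid\tau,\mathbf{s}^{(k)}]=h_\tau(\mathbf{s}^{(k)})$.

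\textbf{Step 3 (average over $\tau$).} Since $\tau$ is uniform on $\{1,\ldots,L\}$, taking the expectation over $\tau$ gives $\sum_\tau \frac{1}{L}h_\tau(\mathbf{s}^{(k)})=\overline{\mathcal{L}}_{\mathrm{diff}}^{(k)}$. This is the claimed identity.

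\textbf{Step 4 (source average).} For \eqref{eq:l-diff}, I use linearity of conditional expectation over the sum $\frac1n\sum_k$, conditioning on the full $\mathbf{S}$. Each term is handled by Step 3, and this holds whether the $\tau$'s are shared across sources or drawn independently per source.

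The only delicate point is Step 2, namely the independence of $\boldsymbol{\eta}^{(k)}$ and $\tau$ from $\mathbf{s}^{(k)}$ within the conditioning. In training, $\mathbf{s}^{(k)}$ itself depends on $\theta_k$ and on the reparameterization noise $\boldsymbol{\epsilon}^{(k)}$. I handle this by stating that the claim is conditional on the realized trajectory, and by noting that the fresh diffusion noise $\boldsymbol{\eta}^{(k)}$ and the step $\tau$ are sampled independently of $\boldsymbol{\epsilon}^{(k)}$.
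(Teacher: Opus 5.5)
Your proposal is correct and follows essentially the same route as the paper: condition on $\mathbf{s}^{(k)}$, use the uniform law of $\tau$ to rewrite the expectation as $\frac{1}{L}\sum_{\tau=1}^{L}$ of the per-step $\boldsymbol{\eta}^{(k)}$-expectations, and then average over $k$ by linearity. Your additions (Tonelli for the nonnegative integrand, independence of $\tau$ and $\boldsymbol{\eta}^{(k)}$ from the reparameterization noise, and the shared-versus-independent $\tau$ remark) are sound and make explicit what the paper's one-line computation leaves implicit.
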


\noindent\textit{Proof.}
Using the uniform distribution over $\tau\in\{1,\ldots,L\}$ and conditioning on $\mathbf{s}^{(k)}$,
\begin{align}
\mathbb{E}_{\tau,\boldsymbol{\eta}^{(k)}}
\left[
\widehat{\mathcal{L}}_{\mathrm{diff}}^{(k)}
\mid \mathbf{s}^{(k)}
\right]
&=
\frac{1}{L}
\sum_{\tau=1}^{L}
\mathbb{E}_{\boldsymbol{\eta}^{(k)}}
\left[
\frac{1}{T}
\left\|
\epsilon_{\theta_k}\!\left(
\sqrt{\bar{\alpha}_{\tau}}\,\mathbf{s}^{(k)}
+
\sqrt{1-\bar{\alpha}_{\tau}}\,\boldsymbol{\eta}^{(k)},
\tau/L
\right)
-
\boldsymbol{\eta}^{(k)}
\right\|_2^2
\right]
\nonumber\\
&=
\overline{\mathcal{L}}_{\mathrm{diff}}^{(k)}.
\end{align}
Averaging over $k=1,\ldots,n$ yields the claim for the full source-wise denoising loss. \hfill$\square$

\begin{proposition}[Optimal $\epsilon$-predictor and score identity for each source branch]
\label{prop:optimal-epsilon-score}
Fix one source branch $k$ and one diffusion step $\tau\in\{1,\ldots,L\}$. Let $\mathbf{S}^{(k)}$ denote the $k$-th source trajectory viewed as a random variable under the learned pushforward law $P_{\mathbf{S}}$ in Proposition~\ref{prop:pushforward-law}. Define
\begin{equation}
\mathbf{X}_{\tau}^{(k)}
=
\sqrt{\bar{\alpha}_{\tau}}\,\mathbf{S}^{(k)}
+
\sqrt{1-\bar{\alpha}_{\tau}}\,\boldsymbol{\eta}^{(k)},
\qquad
\boldsymbol{\eta}^{(k)}\sim\mathcal{N}(\mathbf{0},\mathbf{I}),
\end{equation}
and denote by $p_{\tau}^{(k)}(\mathbf{x})$ the induced density of $\mathbf{X}_{\tau}^{(k)}$. Consider the stepwise denoising risk
\begin{equation}
\mathcal{R}_{\tau}^{(k)}(\epsilon)
=
\mathbb{E}
\left[
\frac{1}{T}
\left\|
\epsilon(\mathbf{X}_{\tau}^{(k)},\tau/L)-\boldsymbol{\eta}^{(k)}
\right\|_2^2
\right].
\end{equation}
Then the minimizer over all square-integrable predictors is given by the conditional expectation
\begin{equation}
\epsilon_{\tau,k}^{\star}(\mathbf{x})
=
\mathbb{E}\!\left[\boldsymbol{\eta}^{(k)}\mid \mathbf{X}_{\tau}^{(k)}=\mathbf{x}\right].
\label{eq:opt-eps}
\end{equation}
Moreover, if $p_{\tau}^{(k)}$ is differentiable and the differentiation under the integral sign is justified, then its score satisfies
\begin{equation}
\nabla_{\mathbf{x}}\log p_{\tau}^{(k)}(\mathbf{x})
=
-\frac{1}{\sqrt{1-\bar{\alpha}_{\tau}}}\,
\epsilon_{\tau,k}^{\star}(\mathbf{x}).
\label{eq:score-eps-identity}
\end{equation}
Hence the source-wise $\epsilon$-prediction objective is equivalently learning the score field of the perturbed source distribution at each diffusion step.
\end{proposition}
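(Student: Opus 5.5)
The plan has two parts: an $L^2$-projection argument for the optimal predictor, and Tweedie's formula for the score identity.

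\emph{Step 1: the minimizer.} The risk $\mathcal{R}_{\tau}^{(k)}$ is a mean squared error between a square-integrable target $\boldsymbol{\eta}^{(k)}$ and a measurable function of $\mathbf{X}_{\tau}^{(k)}$. Write $\epsilon^{\star}=\mathbb{E}[\boldsymbol{\eta}^{(k)}\mid\mathbf{X}_{\tau}^{(k)}]$ and decompose
\begin{equation*}
\epsilon(\mathbf{X}_{\tau}^{(k)})-\boldsymbol{\eta}^{(k)}=\bigl(\epsilon(\mathbf{X}_{\tau}^{(k)})-\epsilon^{\star}(\mathbf{X}_{\tau}^{(k)})\bigr)+\bigl(\epsilon^{\star}(\mathbf{X}_{\tau}^{(k)})-\boldsymbol{\eta}^{(k)}\bigr).
\end{equation*}
The tower property kills the cross term, because $\mathbb{E}[\epsilon^{\star}(\mathbf{X}_{\tau}^{(k)})-\boldsymbol{\eta}^{(k)}\mid\mathbf{X}_{\tau}^{(k)}]=0$. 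Hence
\begin{equation*}
\mathcal{R}_{\tau}^{(k)}(\epsilon)=\mathcal{R}_{\tau}^{(k)}(\epsilon^{\star})+\tfrac{1}{T}\,\mathbb{E}\|\epsilon(\mathbf{X}_{\tau}^{(k)})-\epsilon^{\star}(\mathbf{X}_{\tau}^{(k)})\|_2^2 .
\end{equation*}
This gives \eqref{eq:opt-eps}, with uniqueness up to $p_{\tau}^{(k)}$-null sets. Square integrability holds because $\boldsymbol{\eta}^{(k)}$ is Gaussian.

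\emph{Step 2: the score identity.} Let $a=\sqrt{\bar{\alpha}_{\tau}}$ and $b=\sqrt{1-\bar{\alpha}_{\tau}}>0$, and let $\mu$ be the law of $\mathbf{S}^{(k)}$. This law exists as a marginal of $P_{\mathbf{S}}$ by Proposition~\ref{prop:pushforward-law}; it may have no density, and none is needed. Since $\boldsymbol{\eta}^{(k)}$ is independent of $\mathbf{S}^{(k)}$, the perturbed variable has the Gaussian-convolution density
\begin{equation*}
p_{\tau}^{(k)}(\mathbf{x})=\int \varphi_b(\mathbf{x}-a\mathbf{s})\,\mu(d\mathbf{s}),
\end{equation*}
where $\varphi_b$ is the $\mathcal{N}(\mathbf{0},b^2\mathbf{I})$ density. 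This density is strictly positive everywhere. Differentiating under the integral, which is the justification assumed in the statement, and using $\nabla\varphi_b(\mathbf{u})=-(\mathbf{u}/b^2)\varphi_b(\mathbf{u})$, gives
\begin{equation*}
\nabla p_{\tau}^{(k)}(\mathbf{x})=-\frac{1}{b}\int\frac{\mathbf{x}-a\mathbf{s}}{b}\,\varphi_b(\mathbf{x}-a\mathbf{s})\,\mu(d\mathbf{s}).
\end{equation*}
Next, identify the conditional expectation via Bayes: the joint law of $(\mathbf{S}^{(k)},\mathbf{X}_{\tau}^{(k)})$ is $\varphi_b(\mathbf{x}-a\mathbf{s})\,\mu(d\mathbf{s})\,d\mathbf{x}$. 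On the event $\mathbf{X}_{\tau}^{(k)}=\mathbf{x}$ we have $\boldsymbol{\eta}^{(k)}=(\mathbf{x}-a\mathbf{S}^{(k)})/b$, so
\begin{equation*}
\epsilon^{\star}_{\tau,k}(\mathbf{x})=\frac{\int \frac{\mathbf{x}-a\mathbf{s}}{b}\,\varphi_b(\mathbf{x}-a\mathbf{s})\,\mu(d\mathbf{s})}{p_{\tau}^{(k)}(\mathbf{x})}.
\end{equation*}
Dividing the gradient by $p_{\tau}^{(k)}$ and comparing yields \eqref{eq:score-eps-identity}. The concluding sentence then follows by combining the two steps. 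Minimizing the denoising risk at each $\tau$ recovers $\epsilon^{\star}_{\tau,k}$, which equals $-b\,\nabla\log p_{\tau}^{(k)}$. Summing over $\tau$ as in Proposition~\ref{prop:unbiased-diff} preserves this, because the minimization decouples across steps when the predictor is allowed to depend arbitrarily on $\tau/L$.

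\emph{Main obstacle.} The delicate point is the rigorous identification of the conditional expectation. The source law $\mu$ is only a pushforward measure and may be singular, so Bayes must be written in the measure form above rather than with densities. The Gaussian kernel rescues this. For the interchange of derivative and integral, I would first try dominated convergence: $\|\mathbf{u}\|\varphi_b(\mathbf{u})$ is bounded uniformly in $\mathbf{u}$, so the integrand's gradient is bounded uniformly and $\mu$ is a probability measure. This likely makes the assumed justification automatic whenever $b>0$, i.e.\ for every $\tau\ge 1$ with $\bar{\alpha}_{\tau}<1$.
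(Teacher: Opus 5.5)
Your proposal is correct and follows essentially the same route as the paper: the $L^2$-projection argument for the minimizer, then a Tweedie-type differentiation of the Gaussian convolution $p_\tau^{(k)}(\mathbf{x})=\int\varphi_b(\mathbf{x}-a\mathbf{s})\,\mu(d\mathbf{s})$ to obtain the score identity. The differences are minor refinements. You identify $\mathbb{E}[\boldsymbol{\eta}^{(k)}\mid\mathbf{X}_\tau^{(k)}=\mathbf{x}]$ directly through the measure-form Bayes formula, whereas the paper passes through $\mathbb{E}[\mathbf{S}^{(k)}\mid\mathbf{X}_\tau^{(k)}=\mathbf{x}]$ and the linear relation; your route needs no integrability assumption on $\mathbf{S}^{(k)}$. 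Your dominated-convergence remark also shows that the differentiation-under-the-integral hypothesis holds automatically whenever $\bar{\alpha}_\tau<1$.
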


\noindent\textit{Proof.}
For fixed $\tau$, the loss is a conditional mean-squared error. Therefore, by the standard $L_2$ projection property, the unique minimizer is the conditional expectation in \eqref{eq:opt-eps}.

To derive the score identity, write the noising model as
\begin{equation}
p_{\tau}^{(k)}(\mathbf{x}\mid \mathbf{s})
=
\mathcal{N}\!\left(
\mathbf{x}\mid
\sqrt{\bar{\alpha}_{\tau}}\,\mathbf{s},
(1-\bar{\alpha}_{\tau})\mathbf{I}
\right).
\end{equation}
Let $P_{\mathbf{S}^{(k)}}$ denote the marginal pushforward law of $\mathbf{S}^{(k)}$. Then
\begin{equation}
p_{\tau}^{(k)}(\mathbf{x})
=
\int
p_{\tau}^{(k)}(\mathbf{x}\mid \mathbf{s})\,dP_{\mathbf{S}^{(k)}}(\mathbf{s}).
\end{equation}
Differentiating under the integral sign gives
\begin{align}
\nabla_{\mathbf{x}}\log p_{\tau}^{(k)}(\mathbf{x})
&=
\frac{1}{p_{\tau}^{(k)}(\mathbf{x})}
\int
\nabla_{\mathbf{x}}p_{\tau}^{(k)}(\mathbf{x}\mid \mathbf{s})\,dP_{\mathbf{S}^{(k)}}(\mathbf{s})
\nonumber\\
&=
-\frac{1}{1-\bar{\alpha}_{\tau}}
\left(
\mathbf{x}
-
\sqrt{\bar{\alpha}_{\tau}}\,
\mathbb{E}\!\left[\mathbf{S}^{(k)}\mid \mathbf{X}_{\tau}^{(k)}=\mathbf{x}\right]
\right).
\end{align}
On the other hand, from
\begin{equation}
\mathbf{X}_{\tau}^{(k)}
=
\sqrt{\bar{\alpha}_{\tau}}\,\mathbf{S}^{(k)}
+
\sqrt{1-\bar{\alpha}_{\tau}}\,\boldsymbol{\eta}^{(k)},
\end{equation}
taking conditional expectation given $\mathbf{X}_{\tau}^{(k)}=\mathbf{x}$ yields
\begin{equation}
\mathbf{x}
-
\sqrt{\bar{\alpha}_{\tau}}\,
\mathbb{E}\!\left[\mathbf{S}^{(k)}\mid \mathbf{X}_{\tau}^{(k)}=\mathbf{x}\right]
=
\sqrt{1-\bar{\alpha}_{\tau}}\,
\mathbb{E}\!\left[\boldsymbol{\eta}^{(k)}\mid \mathbf{X}_{\tau}^{(k)}=\mathbf{x}\right].
\end{equation}
Substituting \eqref{eq:opt-eps} proves \eqref{eq:score-eps-identity}. \hfill$\square$

\begin{remark}[Relation to standard DDPM $\epsilon$-prediction]
When the recovered source trajectories are viewed as samples from an induced source law, the denoising term in \eqref{eq:l-diff-k} has the same $\epsilon$-prediction form as standard DDPM training. In that sense, the present source-wise denoising objective can be interpreted as a branch-wise diffusion-learning criterion embedded inside the joint source-separation optimization, rather than as an externally pre-trained diffusion model \cite{ho2020ddpm,song2021score}.
\end{remark}

\subsection{Regularization of the diffusion starting distribution}

Since the latent initial variables $\mathbf{Z}$ are learned through source-wise Gaussian parameters, an additional regularizer is introduced to prevent the initial latent distribution from drifting arbitrarily far from a standard normal reference:
\begin{equation}
p(\mathbf{Z})
=
\prod_{k=1}^{n}
\mathcal{N}\!\left(\mathbf{0},\mathbf{I}\right).
\label{eq:std-normal-z}
\end{equation}
Using \eqref{eq:q-Z}, the KL divergence is
\begin{align}
\mathrm{KL}\!\left(q(\mathbf{Z})\,\|\,p(\mathbf{Z})\right)
&=
\sum_{k=1}^{n}
\mathrm{KL}\!\left(
\mathcal{N}\!\left(\boldsymbol{\mu}^{(k)},\operatorname{diag}(\boldsymbol{\sigma}^{2\,(k)})\right)
\;\middle\|\;
\mathcal{N}\!\left(\mathbf{0},\mathbf{I}\right)
\right)
\nonumber\\
&=
\frac{1}{2}
\sum_{k=1}^{n}\sum_{t=1}^{T}
\left[
\left(\mu_t^{(k)}\right)^2
+
\left(\sigma_t^{(k)}\right)^2
-
1
-
\log\left(\sigma_t^{(k)}\right)^2
\right].
\end{align}
The corresponding regularization term used in the training objective is the averaged KL contribution
\begin{equation}
\mathcal{L}_{\mathrm{KL}}
=
\frac{1}{2Tn}
\sum_{k=1}^{n}\sum_{t=1}^{T}
\left[
\left(\mu_t^{(k)}\right)^2
+
\left(\sigma_t^{(k)}\right)^2
-
1
-
\log\left(\sigma_t^{(k)}\right)^2
\right].
\label{eq:l-kl}
\end{equation}
This averaged form matches the implementation and prevents the magnitude of the KL term from scaling directly with the sequence length or the number of sources.

\begin{proposition}[Unique minimizer of the starting-distribution KL regularizer]
\label{prop:kl-unique-minimizer}
For the $k$-th source branch, recall the Gaussian starting distribution
\begin{equation}
q\!\left(\mathbf{z}^{(k)}\right)
=
\mathcal{N}\!\left(
\boldsymbol{\mu}^{(k)},
\operatorname{diag}\!\bigl(\boldsymbol{\sigma}^{2\,(k)}\bigr)
\right),
\end{equation}
and the standard normal reference
\begin{equation}
p\!\left(\mathbf{z}^{(k)}\right)=\mathcal{N}(\mathbf{0},\mathbf{I}).
\end{equation}
Then the corresponding unnormalized KL contribution
\begin{equation}
\widetilde{\mathcal{L}}_{\mathrm{KL}}^{(k)}
=
\frac{1}{2}
\sum_{t=1}^{T}
\left[
\left(\mu_t^{(k)}\right)^2
+
\left(\sigma_t^{(k)}\right)^2
-
1
-
\log\left(\sigma_t^{(k)}\right)^2
\right]
\end{equation}
is nonnegative and has the unique global minimizer
\begin{equation}
\boldsymbol{\mu}^{(k)}=\mathbf{0},
\qquad
\boldsymbol{\sigma}^{(k)}=\mathbf{1}.
\end{equation}
The same minimizer is preserved by the averaged form in \eqref{eq:l-kl}. Therefore the standard normal is the unique reference point preferred by the KL regularizer at the diffusion starting level.
\end{proposition}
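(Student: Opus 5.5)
The plan is to exploit the fact that $\widetilde{\mathcal{L}}_{\mathrm{KL}}^{(k)}$ separates into a sum of $T$ scalar terms, each depending on a single pair $(\mu_t^{(k)},\sigma_t^{(k)})$. It therefore suffices to study the scalar function
\begin{equation}
h(\mu,\sigma)=\mu^2+\sigma^2-1-\log\sigma^2,\qquad \mu\in\mathbb{R},\ \sigma>0.
\end{equation}
I will show that $h\ge 0$, with equality if and only if $(\mu,\sigma)=(0,1)$. Since $\widetilde{\mathcal{L}}_{\mathrm{KL}}^{(k)}=\frac12\sum_{t=1}^T h(\mu_t^{(k)},\sigma_t^{(k)})$ is a sum of nonnegative terms, it is zero exactly when every term vanishes. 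This gives both nonnegativity and the unique global minimizer $\boldsymbol{\mu}^{(k)}=\mathbf{0}$, $\boldsymbol{\sigma}^{(k)}=\mathbf{1}$.

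For the scalar claim, I split $h$ as $h(\mu,\sigma)=\mu^2+\psi(\sigma^2)$ with $\psi(v)=v-1-\log v$ for $v>0$.

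\begin{itemize}
\item The first part satisfies $\mu^2\ge 0$, with equality only at $\mu=0$.
\item For $\psi$, I use the elementary inequality $\log v\le v-1$, which holds with equality iff $v=1$. This follows from strict concavity of $\log$ together with its tangent line at $v=1$. Equivalently, $\psi'(v)=1-1/v$ and $\psi''(v)=1/v^2>0$, so $\psi$ is strictly convex with a unique critical point at $v=1$, where $\psi(1)=0$.
\item Because $\sigma>0$ by parametrization, $v=\sigma^2=1$ forces $\sigma=1$.
\end{itemize}

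Combining the two parts gives the scalar claim. As an independent check, $\widetilde{\mathcal{L}}_{\mathrm{KL}}^{(k)}$ equals the KL divergence displayed before \eqref{eq:l-kl}, so Gibbs' inequality also yields nonnegativity, with zero iff $q(\mathbf{z}^{(k)})=p(\mathbf{z}^{(k)})$. That is again equivalent to $\boldsymbol{\mu}^{(k)}=\mathbf{0}$ and $\operatorname{diag}(\boldsymbol{\sigma}^{2\,(k)})=\mathbf{I}$.

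For the averaged form \eqref{eq:l-kl}, I note that
\begin{equation}
\mathcal{L}_{\mathrm{KL}}=\frac{1}{Tn}\sum_{k=1}^n\widetilde{\mathcal{L}}_{\mathrm{KL}}^{(k)}.
\end{equation}
This is a positive multiple of a sum of independent nonnegative blocks, each uniquely minimized at the standard normal parameters. Scaling by a positive constant changes neither the set of minimizers nor nonnegativity, and a sum of functions of disjoint parameter blocks is minimized exactly when each block is minimized. So the joint unique minimizer is $\boldsymbol{\mu}^{(k)}=\mathbf{0}$, $\boldsymbol{\sigma}^{(k)}=\mathbf{1}$ for all $k$.

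There is no genuine obstacle here. The only point requiring care is the uniqueness in the $\sigma$ variable: the statement must be read over the constrained domain $\sigma_t^{(k)}>0$ (the standard-deviation parametrization), since otherwise $\sigma=-1$ would also minimize. I will state this domain explicitly when invoking the strict inequality $\log v<v-1$ for $v\neq 1$.
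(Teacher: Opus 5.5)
Your proposal is correct and follows essentially the same route as the paper. The paper also writes each summand as $(\mu_t^{(k)})^2+\psi\bigl((\sigma_t^{(k)})^2\bigr)$ with $\psi(u)=u-\log u-1$, uses $\psi''>0$ to get a unique zero minimum at $u=1$, sums over $t$, and notes that the positive factor $1/(Tn)$ preserves the minimizer. Your explicit restriction to $\sigma_t^{(k)}>0$ and your handling of the sum over $k$ in the averaged form make precise two points the paper leaves implicit.
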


\noindent\textit{Proof.}
For any $u>0$, define
\begin{equation}
\psi(u)=u-\log u-1.
\end{equation}
Since $\psi'(u)=1-u^{-1}$ and $\psi''(u)=u^{-2}>0$, the function $\psi$ is strictly convex and attains its unique minimum $0$ at $u=1$. Hence
\begin{equation}
\left(\mu_t^{(k)}\right)^2+\left(\sigma_t^{(k)}\right)^2-1-\log\left(\sigma_t^{(k)}\right)^2
=
\left(\mu_t^{(k)}\right)^2+\psi\!\left(\left(\sigma_t^{(k)}\right)^2\right)
\ge 0,
\end{equation}
with equality if and only if $\mu_t^{(k)}=0$ and $\left(\sigma_t^{(k)}\right)^2=1$. Summing over $t=1,\ldots,T$ proves the claim. Multiplication by the positive constant $1/(Tn)$ does not change the minimizer. \hfill$\square$

The role of this term is not to force the final initial samples to remain visually identical to standard normal noise throughout training. Rather, it provides a soft regularization on the trainable Gaussian initial distribution itself. Without such a term, the learned latent initial variables may absorb too much source structure too early, causing the initial distribution to drift excessively and weakening the intended role of the reverse diffusion process. In this sense, the KL term helps stabilize optimization, preserves a meaningful Gaussian reference at the start level, and prevents the source-wise initial distribution from degenerating into an unconstrained collection of free latent templates. Proposition~\ref{prop:kl-unique-minimizer} further shows that this Gaussian anchor is unique.

\subsection{Unified objective and joint optimization}

At this point, all ingredients of the training criterion are available: the reconstruction term enforces mixture consistency, the structured prior regularizes latent temporal organization, the denoising term trains the reverse diffusion operators, and the KL term stabilizes the initial latent distribution.

It is important to distinguish the optimized parameters from the sampled intermediate variables. The source trajectories are not optimized as free variables. Instead, for a sampled noise collection $\boldsymbol{\epsilon}=\{\boldsymbol{\epsilon}^{(k)}\}_{k=1}^{n}$,
\begin{equation}
\mathbf{Z}(\boldsymbol{\epsilon})
=
\left[
\boldsymbol{\mu}^{(1)}+\boldsymbol{\sigma}^{(1)}\odot\boldsymbol{\epsilon}^{(1)},
\ldots,
\boldsymbol{\mu}^{(n)}+\boldsymbol{\sigma}^{(n)}\odot\boldsymbol{\epsilon}^{(n)}
\right],
\end{equation}
and
\begin{equation}
\mathbf{S}(\boldsymbol{\epsilon})
=
f_{\Theta}\!\left(\mathbf{Z}(\boldsymbol{\epsilon})\right).
\end{equation}
Thus the optimized quantities are
\begin{equation}
\Theta,\quad
\phi,\quad
\{\ell_k\}_{k=1}^{n},\quad
\{\boldsymbol{\mu}^{(k)},\boldsymbol{\sigma}^{(k)}\}_{k=1}^{n},
\end{equation}
while $\mathbf{Z}$ and $\mathbf{S}$ are generated stochastic intermediates.

Combining the data-fidelity term, the source-wise structured-prior penalty, the source-wise diffusion denoising objective, and the initial-distribution regularization yields the expected training criterion
\begin{equation}
\mathcal{J}
\left(
\Theta,\phi,\{\ell_k\}_{k=1}^{n},
\{\boldsymbol{\mu}^{(k)},\boldsymbol{\sigma}^{(k)}\}_{k=1}^{n};
\widetilde{\mathbf{Y}}
\right)
=
\mathbb{E}_{\boldsymbol{\epsilon},\tau,\boldsymbol{\eta}}
\left[
\widehat{\mathcal{L}}
\left(
\Theta,\phi,\{\ell_k\}_{k=1}^{n},
\{\boldsymbol{\mu}^{(k)},\boldsymbol{\sigma}^{(k)}\}_{k=1}^{n};
\widetilde{\mathbf{Y}},
\boldsymbol{\epsilon},\tau,\boldsymbol{\eta}
\right)
\right],
\label{eq:expected-training-criterion}
\end{equation}
where the single-sample stochastic objective is
\begin{equation}
\widehat{\mathcal{L}}
=
\mathcal{L}_{\mathrm{rec}}
+
\lambda_{\mathrm{prior}}\mathcal{L}_{\mathrm{prior}}
+
\lambda_{\mathrm{diff}}\mathcal{L}_{\mathrm{diff}}
+
\lambda_{\mathrm{KL}}\mathcal{L}_{\mathrm{KL}}.
\label{eq:total-loss}
\end{equation}
Here $\lambda_{\mathrm{prior}}$, $\lambda_{\mathrm{diff}}$, and $\lambda_{\mathrm{KL}}$ are nonnegative trade-off coefficients.

Expanding \eqref{eq:total-loss} gives the implementation-aligned normalized objective
\begin{align}
\widehat{\mathcal{L}}
&=
\frac{1}{2\nu_y Tm}
\left\|
\widetilde{\mathbf{Y}}-g_{\phi}(\mathbf{S}(\boldsymbol{\epsilon}))
\right\|_F^2
\nonumber\\
&\quad
+
\lambda_{\mathrm{prior}}
\left[
\frac{1}{2Tn}
\sum_{k=1}^{n}
\left(
T\log(2\pi)
+
\log\left|\mathbf{K}^{(k)}\right|
+
\mathbf{s}^{(k)}(\boldsymbol{\epsilon})^\top
\mathbf{K}^{(k)-1}
\mathbf{s}^{(k)}(\boldsymbol{\epsilon})
\right)
\right]
\nonumber\\
&\quad
+
\lambda_{\mathrm{diff}}
\frac{1}{n}
\sum_{k=1}^{n}
\frac{1}{T}
\left\|
\epsilon_{\theta_k}\!\left(\mathbf{x}_{\tau}^{(k)},\tau/L\right)
-
\boldsymbol{\eta}^{(k)}
\right\|_2^2
\nonumber\\
&\quad
+
\lambda_{\mathrm{KL}}
\frac{1}{2Tn}
\sum_{k=1}^{n}\sum_{t=1}^{T}
\left[
\left(\mu_t^{(k)}\right)^2
+
\left(\sigma_t^{(k)}\right)^2
-
1
-
\log\left(\sigma_t^{(k)}\right)^2
\right],
\label{eq:total-loss-expanded}
\end{align}
with
\begin{equation}
\mathbf{x}_{\tau}^{(k)}
=
\sqrt{\bar{\alpha}_{\tau}}\,\mathbf{s}^{(k)}(\boldsymbol{\epsilon})
+
\sqrt{1-\bar{\alpha}_{\tau}}\,\boldsymbol{\eta}^{(k)}.
\end{equation}
In practice, \eqref{eq:expected-training-criterion} is optimized by stochastic gradient descent using Monte Carlo samples of $\boldsymbol{\epsilon}$, $\tau$, and $\boldsymbol{\eta}$. The implementation uses the full time grid in each update rather than a mini-batch over time indices.

\begin{theorem}[Symmetry reduction under fixed non-exchangeable branch anchors]
\label{thm:symmetry-reduction-diff}
The following statement describes an anchored variant of the source-wise formulation. It clarifies how non-exchangeable branch information would reduce permutation symmetry, but it should not be read as an unconditional identifiability claim for a fully exchangeable implementation without such anchors.

Let the effective branch descriptor of source $k$ be
\begin{equation}
\omega_k
:=
\left(
\theta_k,\,
\boldsymbol{\mu}^{(k)},\,
\boldsymbol{\sigma}^{(k)},\,
\ell_k,\,
a_k
\right),
\end{equation}
where $a_k$ denotes a fixed branch-specific anchor that is not jointly exchangeable across source indices. Let $\Pi$ be an $n\times n$ permutation matrix acting on source columns:
\begin{equation}
\mathbf{S}^{\Pi}=\mathbf{S}\Pi^\top,
\qquad
\mathbf{Z}^{\Pi}=\mathbf{Z}\Pi^\top.
\end{equation}
Assume the mixing-map model class is closed under latent-coordinate permutations, in the sense that for every $g_{\phi}$ and every permutation $\Pi$, there exists a transformed parameter $\phi^\Pi$ such that
\begin{equation}
g_{\phi^\Pi}(\mathbf{S}^{\Pi})=g_{\phi}(\mathbf{S}).
\label{eq:mixing-perm-closure}
\end{equation}
Define the stabilizer subgroup
\begin{equation}
G_{\omega}
=
\left\{
\Pi\in S_n:
\omega_{\Pi(k)}=\omega_k \text{ for all } k
\right\}.
\end{equation}
Then, conditional on the fixed non-exchangeable anchors, the anchored objective is invariant under a latent-coordinate permutation only if the permutation belongs to $G_{\omega}$. In particular, if the effective branch descriptors are pairwise distinct, the only permutation preserving the anchored objective is the identity.
\end{theorem}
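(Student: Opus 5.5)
The plan is to make precise what "the anchored objective is invariant under a latent-coordinate permutation" means, and then compare the objective term by term. I read a permutation $\Pi$ as acting on source indices only: it relabels sources by $\mathbf{S}\mapsto\mathbf{S}^{\Pi}$ and $\mathbf{Z}\mapsto\mathbf{Z}^{\Pi}$, while every branch $k$ keeps its own descriptor $\omega_k$, which includes the fixed anchor $a_k$. Concretely, column $k$ of $\mathbf{S}^{\Pi}$ is the trajectory $\mathbf{s}^{(\Pi^{-1}(k))}$, and it is now processed by branch $k$ with descriptor $\omega_k$. This is equivalent to keeping the trajectory with its original descriptor, and it is the same as transporting descriptors through $\omega\mapsto\omega\circ\Pi$. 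The anchor is fixed data rather than a trainable parameter, so it cannot be permuted together with the other parameters. That is exactly what breaks the usual label-switching symmetry.

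\textbf{Step 1 (the mixing term is neutral).} By the closure assumption \eqref{eq:mixing-perm-closure}, for every $\Pi$ we can replace $\phi$ by $\phi^{\Pi}$ so that $\mathcal{L}_{\mathrm{rec}}$ is unchanged. Hence the reconstruction term never distinguishes permutations, and any failure of invariance has to come from the branch-wise terms.

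\textbf{Step 2 (the branch-wise terms are additive).} By \eqref{eq:l-prior}, \eqref{eq:l-diff} and \eqref{eq:l-kl}, together with the anchor contribution, the remaining part of \eqref{eq:total-loss-expanded} has the form $\sum_k h(\mathbf{s}^{(k)},\mathbf{z}^{(k)};\omega_k)$ for a single functional $h$. This holds for the prior through $\ell_k$, for diffusion through $\theta_k$, for the KL term through $(\boldsymbol{\mu}^{(k)},\boldsymbol{\sigma}^{(k)})$, and for the anchor through $a_k$. Under $\Pi$ the sum becomes $\sum_k h(\mathbf{s}^{(k)},\mathbf{z}^{(k)};\omega_{\Pi(k)})$.

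\textbf{Step 3 (sufficiency).} If $\Pi\in G_{\omega}$, the two sums agree term by term, so the objective is invariant. This direction is routine.

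\textbf{Step 4 (necessity, the main step).} Suppose the objective is invariant under $\Pi$ for all admissible inputs, that is, for the whole expected criterion \eqref{eq:expected-training-criterion} as a function of the data and the stochastic draws. Since $\Pi\notin G_{\omega}$, there is some $k$ with $\omega_{\Pi(k)}\neq\omega_k$.

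I would then choose inputs that isolate branch $k$: trajectories equal to a reference on every other branch and varied only on branch $k$. Invariance would then force $h(\cdot;\omega_{\Pi(k)})\equiv h(\cdot;\omega_k)$ as functionals. To get a contradiction I need a separation hypothesis: distinct descriptors must induce distinct branch functionals. This is where the non-exchangeable anchor does its work. I would build into the definition of $a_k$ an anchor term, for example a fixed reference trajectory penalty $\|\mathbf{s}^{(k)}-a_k\|^2$ or a fixed length-scale target, whose value genuinely depends on $a_k$. Then $a_{\Pi(k)}\neq a_k$ already separates the functionals, while differences only in $\theta_k$ or $\ell_k$ separate them whenever the parametrization is injective. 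For instance, $\ell\mapsto\log|\mathbf{K}|+\mathbf{s}^\top\mathbf{K}^{-1}\mathbf{s}$ varies with $\ell$, by Lemma~\ref{lem:gp-gradients}.

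I expect this injectivity step to be the main obstacle. Neural-network parameters $\theta_k$ are not injective, so I would state the descriptor equality "up to functional equivalence" and rely on the anchors for strict distinctness. The final claim follows directly: if all $\omega_k$ are pairwise distinct, then $G_{\omega}=\{\mathrm{id}\}$, so the identity is the only permutation that preserves the objective.
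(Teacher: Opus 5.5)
Your Steps 1--3 match the paper's sketch: closure neutralizes $\mathcal{L}_{\mathrm{rec}}$, the other terms split branch-wise, and descriptor-preserving permutations are harmless. You are also right that necessity is where the content lies. The paper passes from ``each term depends on $\omega_k$'' to ``invariant only if the descriptors are preserved'', which silently assumes the separation property you make explicit.

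\textbf{The gap in Step 4.} The error changes what the separation property has to be. Put $D_j(x)=h(x;\omega_{\Pi(j)})-h(x;\omega_j)$. Invariance for all inputs says $\sum_j D_j(x_j)=0$ for every $(x_1,\ldots,x_n)$. Isolating branch $k$ therefore gives only $D_k\equiv c_k$, a constant, not $h(\cdot;\omega_{\Pi(k)})\equiv h(\cdot;\omega_k)$. Conversely, if every $D_j$ is constant, invariance holds, because the constants sum to zero (evaluate at $x_1=\cdots=x_n$).

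So in your reading (columns permuted, all descriptors held fixed), anything in $h$ that does not depend on the trajectory is invisible to $\Pi$. This covers the KL term in $(\boldsymbol{\mu}^{(k)},\boldsymbol{\sigma}^{(k)})$, the $\log|\mathbf{K}^{(k)}|$ term, and any anchor entering as a parameter-level penalty, such as your ``length-scale target''. These are merely relabeled inside a sum. The usable hypothesis is that distinct descriptors give branch functionals whose difference is a \emph{non-constant} function of $\mathbf{s}$.

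\textbf{Which anchors work.} Your reference-trajectory anchor satisfies this, since $\|\mathbf{s}-a\|^2-\|\mathbf{s}-a'\|^2$ is affine in $\mathbf{s}$ with gradient $2(a'-a)\neq\mathbf{0}$. Distinct length-scales also satisfy it, but the reason is that $\mathbf{K}_{\ell}\neq\mathbf{K}_{\ell'}$ changes the quadratic form. The $\ell$-gradient from the GP lemma is not the reason, since its log-determinant part is one of the cancelling constants.

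\textbf{The statement itself needs repair.} Two branches sharing $\theta$, $\ell$ and anchor but differing in $\boldsymbol{\mu}$ have distinct descriptors, yet swapping them preserves the objective. Hence $G_{\omega}$ should be the stabilizer of the branch functionals modulo additive constants. The final claim should assume pairwise distinct, trajectory-coupled anchors, and your ``functional equivalence'' must be meant in this modulo-constants sense.

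\textbf{Fix one reading.} Your remark that the anchor ``cannot be permuted together with the other parameters'' describes a different reading, label switching. There $\theta_k,\boldsymbol{\mu}^{(k)},\boldsymbol{\sigma}^{(k)},\ell_k$ travel with the columns and only $a_k$ stays. In that reading a parameter-level anchor would break the symmetry, but the relevant stabilizer involves the anchors alone. Commit to one reading before stating the hypothesis.
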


\noindent\textit{Proof sketch.}
Under \eqref{eq:mixing-perm-closure}, the reconstruction term is unchanged by a joint permutation of latent branches together with a corresponding reparameterization of the mixing map. The remaining terms in \eqref{eq:total-loss} decompose source-wise:
\begin{equation}
\mathcal{L}_{\mathrm{prior}}
=
\sum_{k=1}^{n}\mathcal{L}_{\mathrm{prior}}^{(k)},
\qquad
\mathcal{L}_{\mathrm{diff}}
=
\frac{1}{n}\sum_{k=1}^{n}\mathcal{L}_{\mathrm{diff}}^{(k)},
\qquad
\mathcal{L}_{\mathrm{KL}}
=
\sum_{k=1}^{n}\mathcal{L}_{\mathrm{KL}}^{(k)}.
\end{equation}
Each of these terms depends on the corresponding effective descriptor $\omega_k$. Therefore the anchored objective remains unchanged only if the permutation preserves all branch descriptors, which is precisely the condition $\Pi\in G_{\omega}$. If the descriptors are pairwise distinct, no nontrivial permutation can preserve them. \hfill$\square$

\begin{remark}[Scope of Theorem~\ref{thm:symmetry-reduction-diff}]
The above symmetry-reduction statement should be interpreted conditionally on the presence of non-exchangeable branch anchors. If all source branches are jointly learned with fully exchangeable parameterization and no additional anchoring mechanism, then global relabeling symmetries may still remain in the optimization landscape. In that case, the theorem should be read as a conditional symmetry-reduction result rather than an unconditional identifiability statement.
\end{remark}

Equation \eqref{eq:total-loss-expanded} shows that source recovery, source-wise temporal regularization, diffusion denoising, and latent initial normalization are optimized simultaneously from the observed mixtures. Accordingly, the model performs source separation in an end-to-end manner rather than through a separate post-processing stage. Because each latent dimension is associated with its own diffusion mechanism, structured prior, and initial-distribution parameters, the separation process is realized within training itself: as optimization proceeds, these source-wise quantities are gradually driven toward different source-specific configurations, thereby encouraging different latent dimensions to specialize to different source components. Thus, the latent source trajectories, the source-wise diffusion mechanisms, the prior hyperparameters, and the explicit mixing map are all adapted jointly in a single unsupervised training process.

\subsection{Weak recovery in the linear low-noise limit}

This analysis should not be interpreted as restricting the proposed model to linear mixtures. As described above, the mixing map $g_{\phi}$ can be either linear or nonlinear. The reason for considering the linear low-noise regime here is instead theoretical: it provides a tractable setting in which a conditional weak recovery statement can be formulated. The present framework is not intended to establish a full unconditional identifiability theorem for general nonlinear source separation. Nevertheless, the idealized linear regime still allows us to clarify how the structured regularizers may select the correct source class when the reconstruction error becomes negligible.

For this purpose, we analyze the induced source-space objective and abstract away from the finite neural parametrization of the reverse diffusion generator. This abstraction does not mean that the practical model optimizes $\mathbf{S}$ as a free variable. Rather, it isolates the source-space selection effect of the structured regularizer under an exact, or asymptotically exact, reconstruction constraint.

\begin{theorem}[Weak recovery in the linear low-noise limit]
\label{thm:weak-recovery-linear}
Consider the linear mixing regime
\begin{equation}
\widetilde{\mathbf{Y}}_{\delta}
=
\mathbf{S}^{\star}\mathbf{A}^{\star\top}
+
\mathbf{\Xi}_{\delta},
\qquad
\|\mathbf{\Xi}_{\delta}\|_F\to 0,
\end{equation}
where $\mathbf{A}^{\star}\in\mathbb{R}^{m\times n}$ has full column rank. Let the idealized source-space penalized objective be
\begin{equation}
\mathcal{J}_{\delta}(\mathbf{S},\phi)
=
\frac{1}{2\nu_y Tm}
\|
\widetilde{\mathbf{Y}}_{\delta}-g_{\phi}(\mathbf{S})
\|_F^2
+
\lambda_{\mathrm{prior}}\mathcal{L}_{\mathrm{prior}}
+
\lambda_{\mathrm{diff}}\mathcal{L}_{\mathrm{diff}}
+
\lambda_{\mathrm{KL}}\mathcal{L}_{\mathrm{KL}}.
\end{equation}
Assume that:

(i) the mixing-map family contains the true linear mixing map;

(ii) the exact-reconstruction fiber
\begin{equation}
\mathcal{F}_0
=
\left\{
(\mathbf{S},\phi):
g_{\phi}(\mathbf{S})=\mathbf{S}^{\star}\mathbf{A}^{\star\top}
\right\}
\end{equation}
is nonempty;

(iii) on $\mathcal{F}_0$, the combined regularizer
\begin{equation}
\mathcal{R}(\mathbf{S})
=
\lambda_{\mathrm{prior}}\mathcal{L}_{\mathrm{prior}}
+
\lambda_{\mathrm{diff}}\mathcal{L}_{\mathrm{diff}}
+
\lambda_{\mathrm{KL}}\mathcal{L}_{\mathrm{KL}}
\end{equation}
attains its unique minimum over the equivalence class of the true sources, up to the standard linear source-separation ambiguities admitted by the model.

Then any sequence of global minimizers $(\widehat{\mathbf{S}}_{\delta},\widehat{\phi}_{\delta})$ of $\mathcal{J}_{\delta}$ has accumulation points whose source component belongs to that minimizing equivalence class. In particular, the penalized solutions recover the true source class in the linear low-noise limit.
\end{theorem}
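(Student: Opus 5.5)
The plan is a standard $\Gamma$-convergence / penalty-method argument, carried out in source space with $\phi$ treated jointly. Since the reconstruction coefficient $1/(2\nu_y Tm)$ is fixed and the noise vanishes, I will read the low-noise limit as making the data term act more and more like a hard constraint. Concretely, I would either rescale the data term by $1/\|\mathbf{\Xi}_\delta\|_F$, or (equivalently for the argument) interpret ``low-noise limit'' as the regime where the effective weight $1/\nu_y\to\infty$ along the sequence. This interpretive choice is the first thing I would make explicit, because with a fixed finite weight the minimizers need not land on $\mathcal{F}_0$ at all.

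The steps, in order:

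\begin{enumerate}
\item \textbf{Comparison bound.} Use (i) and (ii) to pick a feasible competitor $(\mathbf{S}^\circ,\phi^\circ)\in\mathcal{F}_0$ lying in the minimizing class of (iii). Comparing values gives
\begin{equation}
\mathcal{J}_\delta(\widehat{\mathbf{S}}_\delta,\widehat\phi_\delta)\le \mathcal{J}_\delta(\mathbf{S}^\circ,\phi^\circ)=\frac{1}{2\nu_y Tm}\|\mathbf{\Xi}_\delta\|_F^2+\mathcal{R}(\mathbf{S}^\circ).
\end{equation}
Since every term of $\mathcal{R}$ is bounded below (Proposition~\ref{prop:kl-unique-minimizer} for the KL term, nonnegativity of the denoising loss, and a finite log-determinant floor for the GP term because $\xi>0$), this yields two consequences. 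First, $\mathcal{R}(\widehat{\mathbf{S}}_\delta)\le \mathcal{R}(\mathbf{S}^\circ)+o(1)$. Second, the residual $\|\widetilde{\mathbf{Y}}_\delta-g_{\widehat\phi_\delta}(\widehat{\mathbf{S}}_\delta)\|_F$ is controlled, and it tends to zero once the data term is treated as a hard constraint.

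\item \textbf{Compactness.} The GP quadratic form satisfies $\mathbf{s}^\top\mathbf{K}^{-1}\mathbf{s}\ge \|\mathbf{s}\|^2/\lambda_{\max}(\mathbf{K})$. The eigenvalues of $\mathbf{K}^{(k)}$ are bounded above by $T\sigma_f^2+\xi$, uniformly in $\ell_k$. Hence the bound on $\mathcal{L}_{\mathrm{prior}}$ bounds $\|\widehat{\mathbf{S}}_\delta\|_F$, so accumulation points $\bar{\mathbf{S}}$ exist. For $\phi$ in the linear class I would bound the mixing matrix using full column rank of $\mathbf{A}^\star$, or quotient by the scaling ambiguity, so that $\widehat\phi_\delta$ also has accumulation points $\bar\phi$.

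\item \textbf{Passing to the limit.} By continuity of $g_\phi$ in $(\mathbf{S},\phi)$, the limit satisfies $g_{\bar\phi}(\bar{\mathbf{S}})=\mathbf{S}^\star\mathbf{A}^{\star\top}$, so $(\bar{\mathbf{S}},\bar\phi)\in\mathcal{F}_0$. Lower semicontinuity of $\mathcal{R}$ then gives
\begin{equation}
\mathcal{R}(\bar{\mathbf{S}})\le\liminf_\delta\mathcal{R}(\widehat{\mathbf{S}}_\delta)\le \mathcal{R}(\mathbf{S}^\circ)=\min_{\mathcal{F}_0}\mathcal{R}.
\end{equation}
This lower semicontinuity is a hypothesis I would add or justify: the GP term is continuous, and for the diffusion term one can optimize over $\Theta$ or fix it.

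\item \textbf{Conclusion.} By the uniqueness in (iii), $\bar{\mathbf{S}}$ lies in the minimizing equivalence class of true sources.
\end{enumerate}

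The main obstacle is the mismatch between the abstract variable $\mathbf{S}$ and the fact that $\mathcal{L}_{\mathrm{diff}}$ and $\mathcal{L}_{\mathrm{KL}}$ depend on $(\Theta,\boldsymbol\mu,\boldsymbol\sigma)$ and on random samples rather than on $\mathbf{S}$ alone. To handle this I would define $\mathcal{R}(\mathbf{S})$ as the infimum of these terms over all parameters generating $\mathbf{S}$, with expectations taken. That makes $\mathcal{R}$ a function of $\mathbf{S}$ bounded below, but its lower semicontinuity is then not automatic. I would include it among the conditions implicit in (iii), and otherwise fall back to the GP term alone, which is continuous and coercive.

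A secondary point is that equivalence classes under scaling are noncompact. I would fix the scale using the GP penalty, which penalizes large norms, and the KL anchor, so that accumulation points stay away from degenerate rescalings.
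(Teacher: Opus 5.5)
Your skeleton is the paper's own sketch: compare with a feasible point of the true class, show accumulation points lie on $\mathcal{F}_0$, then invoke (iii). Your opening caveat identifies the step that sketch merely asserts. With $\nu_y$ fixed, the comparison bound does not force the residual to zero, and the conclusion is in fact false.

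Take $n=m=1$, the normalized linear class $g_a(\mathbf{s})=a\mathbf{s}$ with $a\in\{\pm1\}$, $\lambda_{\mathrm{diff}}=\lambda_{\mathrm{KL}}=0$, and $\mathbf{\Xi}_\delta=\mathbf{0}$. Then $\mathcal{F}_0=\{(\mathbf{s}^\star,1),(-\mathbf{s}^\star,-1)\}$, so (i)--(iii) hold trivially. Yet for every $\ell$ the minimizing $\mathbf{s}$ is $\pm(\mathbf{I}+\nu_y\lambda_{\mathrm{prior}}\mathbf{K}^{-1})^{-1}\mathbf{s}^\star\neq\pm\mathbf{s}^\star$, a GP-smoothed, shrunk version of the truth. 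Your reweighting is therefore necessary, not a matter of interpretation.

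Its two forms are not interchangeable, however. With data weight $w_\delta$, your Step 1 gives $\mathcal{R}(\widehat{\mathbf{S}}_\delta)\le\mathcal{R}(\mathbf{S}^\circ)+w_\delta\|\mathbf{\Xi}_\delta\|_F^2$, and a squared residual at most $\|\mathbf{\Xi}_\delta\|_F^2+(\mathcal{R}(\mathbf{S}^\circ)-\inf\mathcal{R})/w_\delta$. So you need $w_\delta\to\infty$ \emph{and} $w_\delta\|\mathbf{\Xi}_\delta\|_F^2\to0$, the usual Tikhonov parameter choice. Rescaling by $1/\|\mathbf{\Xi}_\delta\|_F$ satisfies both. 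Letting $1/\nu_y\to\infty$ at an unspecified rate does not give the $o(1)$ you use.

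The genuine gap is the scale control in Step 2 and in your closing remark.
\begin{itemize}
\item For invertible diagonal $\mathbf{D}=\operatorname{diag}(d_1,\ldots,d_n)$, the move $(\mathbf{S},\mathbf{A})\mapsto(\mathbf{S}\mathbf{D}^{-1},\mathbf{A}\mathbf{D})$ leaves $\mathbf{S}\mathbf{A}^\top$ unchanged and multiplies the $k$-th GP quadratic form by $d_k^{-2}$. The GP penalty therefore drives minimizing sequences \emph{toward} degenerate rescalings (a column of $\widehat{\mathbf{S}}_\delta$ shrinking while the matching column of $\widehat{\mathbf{A}}_\delta$ blows up), not away from them.
\item Full column rank of $\mathbf{A}^\star$ gives no upper bound on $\widehat{\mathbf{A}}_\delta$.
\item The KL anchor does nothing here either. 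With an unrestricted $\epsilon$-predictor, your infimal diffusion and KL terms vanish identically. The predictor $\epsilon(\mathbf{x},\tau/L)=(\mathbf{x}-\sqrt{\bar\alpha_\tau}\,\mathbf{s})/\sqrt{1-\bar\alpha_\tau}$ has zero denoising loss. Up to the $\varepsilon$ stabilizer, its DDIM map sends every start $\mathbf{z}$ to $\mathbf{s}$, so $\boldsymbol{\mu}=\mathbf{0}$, $\boldsymbol{\sigma}=\mathbf{1}$ is admissible.
\end{itemize}
This also settles your lower-semicontinuity worry, since $\mathcal{R}$ reduces to $\lambda_{\mathrm{prior}}\mathcal{L}_{\mathrm{prior}}$.

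Consequently, in an unnormalized linear class $\mathcal{R}$ has no minimizer on $\mathcal{F}_0$, and $\mathcal{J}_\delta$ generally has no global minimizer. Then (iii) cannot hold and the statement has no content.

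Your alternative of quotienting by the scaling ambiguity is the right instinct, but it must be done on the mixing side. Use the column-normalized linear class that Algorithm~\ref{alg:salswdm} actually enforces. Unit-norm columns make the $\phi$-set compact and leave only permutation and sign ambiguities, with $\mathbf{S}^\star$ rescaled to absorb the column norms of $\mathbf{A}^\star$. Combine this with your coercivity bound via $\lambda_{\max}(\mathbf{K}^{(k)})\le T\sigma_f^2+\xi$, which needs $\lambda_{\mathrm{prior}}>0$. That gives accumulation points of $(\widehat{\mathbf{S}}_\delta,\widehat\phi_\delta)$, and your Steps 3--4 then go through by continuity of $(\mathbf{S},\mathbf{A})\mapsto\mathbf{S}\mathbf{A}^\top$.
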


\noindent\textit{Proof sketch.}
Since $(\mathbf{S}^{\star},\phi^{\star})$ is feasible up to vanishing noise, the minimum objective value is asymptotically bounded above by the value of the regularizer on the true-source equivalence class plus an $o(1)$ reconstruction term. Therefore any global minimizer must asymptotically drive the reconstruction error to zero, and all accumulation points must lie on the exact-reconstruction fiber $\mathcal{F}_0$. By assumption (iii), the combined regularizer selects uniquely the true-source equivalence class on $\mathcal{F}_0$. Hence every accumulation point of the minimizing sequence must belong to that class. \hfill$\square$

\begin{remark}[Boundary of the present theory]
The above results should be understood as mechanism-level and optimization-level support for the proposed StrADiff formulation, rather than as a full general identifiability theorem for nonlinear source separation. Classical results in nonlinear ICA show that, without additional structural assumptions, general nonlinear latent recovery is not identifiable. Stronger identifiability statements would require extra ingredients such as observed auxiliary variables, explicit nonstationary modulation, hidden-state structure, or other non-exchangeable side information. Therefore, Theorem~\ref{thm:weak-recovery-linear} should be read as a conditional recovery statement: if the structured regularizer selects the true source class on the exact-reconstruction fiber, then the penalized estimator converges to that class in the linear low-noise limit.
\end{remark}

\subsection{Monte Carlo source estimation}

After training, source uncertainty is estimated by repeated sampling from the learned latent initial distribution. Specifically, for $r=1,\ldots,R$,
\begin{equation}
\mathbf{Z}^{(r)} \sim q(\mathbf{Z}),
\qquad
\mathbf{S}^{(r)} = f_{\Theta}\!\left(\mathbf{Z}^{(r)}\right).
\end{equation}
The empirical source mean and standard deviation are then computed as
\begin{equation}
\widehat{\boldsymbol{\mu}}_{\mathbf{S}}
=
\frac{1}{R}\sum_{r=1}^{R}\mathbf{S}^{(r)},
\label{eq:mc-mean}
\end{equation}
and
\begin{equation}
\widehat{\boldsymbol{\sigma}}_{\mathbf{S}}
=
\left[
\frac{1}{R-1}
\sum_{r=1}^{R}
\left(
\mathbf{S}^{(r)}-\widehat{\boldsymbol{\mu}}_{\mathbf{S}}
\right)^{\odot 2}
\right]^{1/2}.
\label{eq:mc-std}
\end{equation}
The final reconstructed observation can be obtained from the source mean through the plug-in reconstruction
\begin{equation}
\widehat{\mathbf{Y}}_{\mathrm{plug}}
=
g_{\phi}\!\left(\widehat{\boldsymbol{\mu}}_{\mathbf{S}}\right),
\end{equation}
followed by inverse normalization if required.

\subsection{Algorithm and architecture }
\begin{algorithm}[t]
\footnotesize
\setlength{\algomargin}{0.8em}
\SetAlgoLined
\DontPrintSemicolon
\SetNlSkip{0.2em}
\SetInd{0.2em}{0.5em}
\linespread{0.96}\selectfont
\KwIn{Normalized observations $\widetilde{\mathbf{Y}}\in\mathbb{R}^{T\times m}$, number of sources $n$, diffusion length $L$, maximum iteration number $E$, weights $\lambda_{\mathrm{prior}},\lambda_{\mathrm{diff}},\lambda_{\mathrm{KL}}$, learning rate $\eta$.}
\KwOut{Estimated source mean $\widehat{\boldsymbol{\mu}}_{\mathbf{S}}$, source uncertainty $\widehat{\boldsymbol{\sigma}}_{\mathbf{S}}$, trained parameters $\{\Theta,\phi,\{\ell_k\}_{k=1}^{n},\{\boldsymbol{\mu}^{(k)},\boldsymbol{\sigma}^{(k)}\}_{k=1}^{n}\}$.}

Initialize $\{\boldsymbol{\mu}^{(k)},\boldsymbol{\sigma}^{(k)}\}_{k=1}^{n}$, $\{\epsilon_{\theta_k}\}_{k=1}^{n}$, unconstrained GP parameters $\{\gamma_k\}_{k=1}^{n}$ with $\ell_k=\exp(\gamma_k)+10^{-6}$, and $\phi$\;

\While{not converged}{
    Sample $\mathbf{z}^{(k)}=\boldsymbol{\mu}^{(k)}+\boldsymbol{\sigma}^{(k)}\odot\boldsymbol{\epsilon}^{(k)}$, with $\boldsymbol{\epsilon}^{(k)}\sim\mathcal{N}(\mathbf{0},\mathbf{I})$, for $k=1,\ldots,n$\;
    
    \For{$k=1$ \KwTo $n$}{
        Set $\mathbf{x}_L^{(k)}\leftarrow \mathbf{z}^{(k)}$\;
        
        \For{$\tau=L,L-1,\dots,1$}{
            Compute $\widehat{\mathbf{x}}_{0,\tau}^{(k)}=
            \dfrac{
            \mathbf{x}_{\tau}^{(k)}-\sqrt{1-\bar{\alpha}_{\tau}}\,
            \epsilon_{\theta_k}\!\left(\mathbf{x}_{\tau}^{(k)},\tau/L\right)}
            {\sqrt{\bar{\alpha}_{\tau}}+\varepsilon}$\;
            
            Update $\mathbf{x}_{\tau-1}^{(k)}=
            \sqrt{\bar{\alpha}_{\tau-1}}\,\widehat{\mathbf{x}}_{0,\tau}^{(k)}
            +\sqrt{1-\bar{\alpha}_{\tau-1}}\,
            \epsilon_{\theta_k}\!\left(\mathbf{x}_{\tau}^{(k)},\tau/L\right)$\;
        }
        Set $\mathbf{s}^{(k)}\leftarrow \mathbf{x}_{0}^{(k)}$\;
    }
    
    Form $\mathbf{S}=[\mathbf{s}^{(1)},\ldots,\mathbf{s}^{(n)}]$ and $\widehat{\mathbf{Y}}=g_{\phi}(\mathbf{S})$\;
    
    Compute the normalized reconstruction loss
    $\mathcal{L}_{\mathrm{rec}}=
    \dfrac{1}{2\nu_y Tm}\|
    \widetilde{\mathbf{Y}}-\widehat{\mathbf{Y}}
    \|_F^2$\;
    
    Compute the normalized GP prior loss
    $\mathcal{L}_{\mathrm{prior}}=
    \dfrac{1}{2Tn}\sum_{k=1}^{n}
    \left[
    T\log(2\pi)+\log|\mathbf{K}^{(k)}|+\mathbf{s}^{(k)\top}\mathbf{K}^{(k)-1}\mathbf{s}^{(k)}
    \right]$\;
    
    Sample diffusion step $\tau$ and noises $\{\boldsymbol{\eta}^{(k)}\}_{k=1}^{n}$, then set
    $\mathbf{x}_{\tau}^{(k)}=
    \sqrt{\bar{\alpha}_{\tau}}\,\mathbf{s}^{(k)}
    +\sqrt{1-\bar{\alpha}_{\tau}}\,\boldsymbol{\eta}^{(k)}$\;
    
    Compute $\mathcal{L}_{\mathrm{diff}}=
    \frac{1}{n}\sum_{k=1}^{n}\dfrac{1}{T}
    \left\|
    \epsilon_{\theta_k}\!\left(\mathbf{x}_{\tau}^{(k)},\tau/L\right)-\boldsymbol{\eta}^{(k)}
    \right\|_2^2$\;
    
    Compute the averaged starting-distribution KL loss
    $\mathcal{L}_{\mathrm{KL}}=
    \dfrac{1}{2Tn}\sum_{k=1}^{n}\sum_{t=1}^{T}
    \left[
    (\mu_t^{(k)})^2+(\sigma_t^{(k)})^2-1-\log(\sigma_t^{(k)})^2
    \right]$\;
    
    Form $\widehat{\mathcal{L}}=
    \mathcal{L}_{\mathrm{rec}}
    +\lambda_{\mathrm{prior}}\mathcal{L}_{\mathrm{prior}}
    +\lambda_{\mathrm{diff}}\mathcal{L}_{\mathrm{diff}}
    +\lambda_{\mathrm{KL}}\mathcal{L}_{\mathrm{KL}}$\;
    
    Update $\{\Theta,\phi,\gamma_1,\ldots,\gamma_n,\boldsymbol{\mu}^{(1)},\ldots,\boldsymbol{\mu}^{(n)},\boldsymbol{\sigma}^{(1)},\ldots,\boldsymbol{\sigma}^{(n)}\}\leftarrow
    \mathrm{OptimizerStep}(\nabla \widehat{\mathcal{L}})$\;
    
    \If{the chosen mixing map is linear}{
        Normalize the columns of the mixing matrix\;
    }
}
Draw $R$ Monte Carlo samples from $q(\mathbf{Z})$, compute $\mathbf{S}^{(r)}=f_{\Theta}(\mathbf{Z}^{(r)})$, and estimate $\widehat{\boldsymbol{\mu}}_{\mathbf{S}}$ and $\widehat{\boldsymbol{\sigma}}_{\mathbf{S}}$ using \eqref{eq:mc-mean}--\eqref{eq:mc-std}\;
\caption{Training procedure of the structured adaptive latent source-wise diffusion model.}
\label{alg:salswdm}
\end{algorithm}

To further illustrate how the above components are integrated, the overall architecture is shown in Figure~\ref{fig:overall-architecture}. Each latent dimension is treated as one source branch. For the $k$-th branch, a trainable Gaussian initial variable $\mathbf{z}^{(k)}$ is first introduced and regularized toward a standard normal reference. It is then mapped through a source-wise reverse diffusion process to generate the recovered source trajectory $\mathbf{s}^{(k)}$. A source-specific GP prior is imposed on this trajectory to enforce temporal structure. After all branches are generated, the recovered source matrix $\mathbf{S}$ is passed through the mixing map $g_{\phi}$ to reconstruct the observed mixtures. In this way, source-wise initialization, diffusion generation, structured prior regularization, and mixture reconstruction are optimized jointly in one end-to-end framework.

\begin{figure*}[t]
    \centering
    \includegraphics[width=\textwidth]{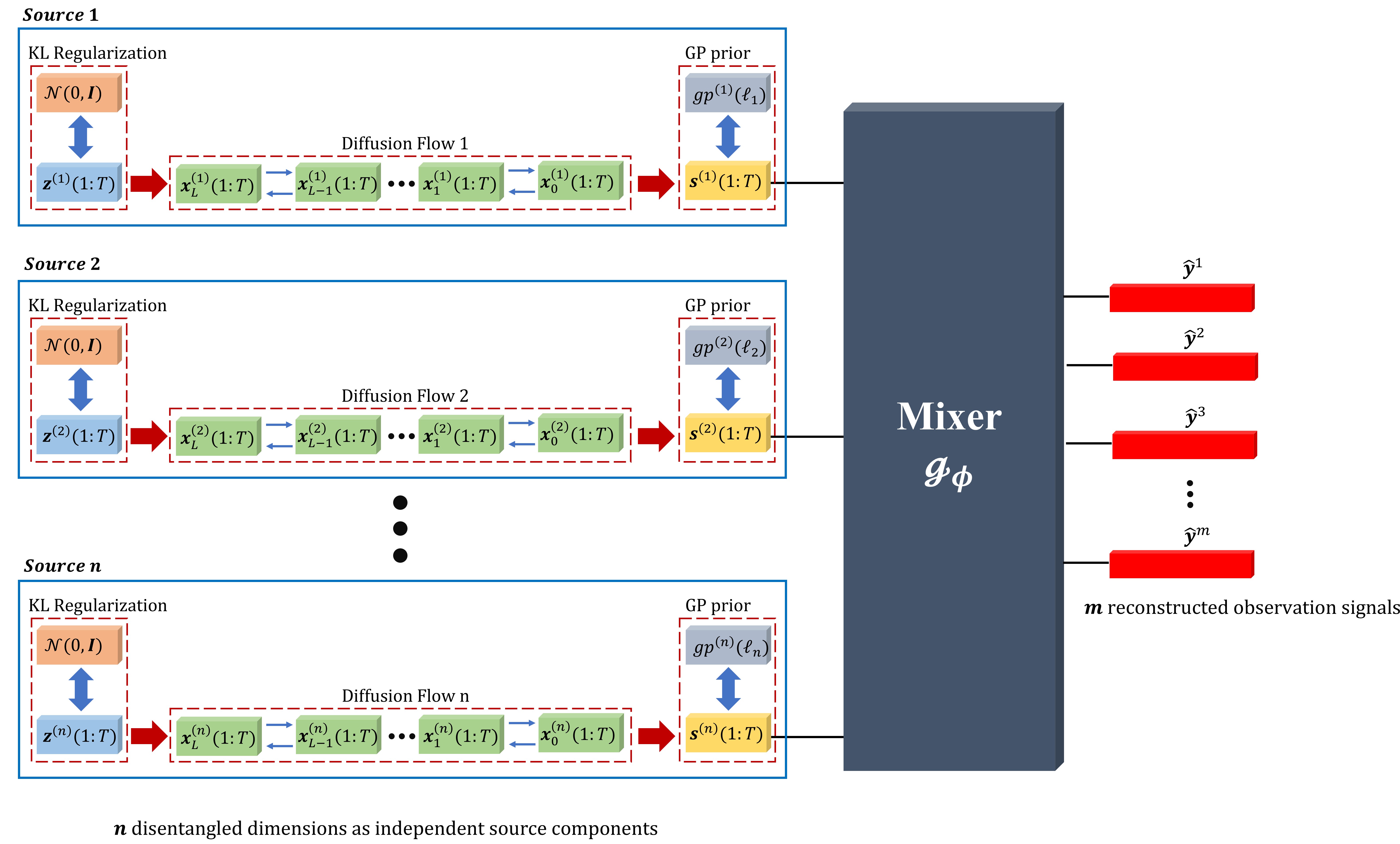}
    \caption{Overall architecture of the proposed StrADiff Framework.}
    \label{fig:overall-architecture}
\end{figure*}

\section{Experimental Study}

\subsection{Experimental setting}

To evaluate the proposed StrADiff framework, three artificial source signals with different temporal structures were used throughout the experiments. These three sources were designed to exhibit clearly different source-wise dynamics, so that the ability of the proposed source-wise adaptive diffusion mechanism and source-specific GP priors to separate heterogeneous temporal patterns could be examined more directly. Both linear and nonlinear mixing scenarios were considered. In the nonlinear case, the nonlinear mixing construction followed the same general experimental setting used in \cite{wei2026pdgmmvae}.

Unless otherwise stated, the number of sources was set to three, the reverse diffusion length was set to $L=20$, and the model was trained end-to-end for 10{,}000 epochs. For each source branch, the recovered source uncertainty was estimated by Monte Carlo sampling from the learned diffusion-start distribution followed by reverse diffusion. The reported source trajectories therefore correspond to Monte Carlo estimated means, while the shaded bands indicate the associated $95\%$ confidence intervals.

\subsection{Linear mixing results}

Figure~\ref{fig:exp-linear-separation} shows the final source recovery results in the linear mixing experiment. After permutation matching and sign correction, all three recovered sources are highly consistent with the corresponding true signals. The matched correlations are very close to one, indicating that the proposed framework can successfully recover the latent sources in the linear case. It is also worth noting that the estimated uncertainty bands are visually very narrow. This is because the Monte Carlo standard deviations at convergence are already very small, so the resulting $95\%$ confidence intervals are difficult to observe at the plotting scale. In other words, the final recovered sources are not only accurate but also highly concentrated.

\begin{figure*}[t]
    \centering
    \includegraphics[width=\textwidth]{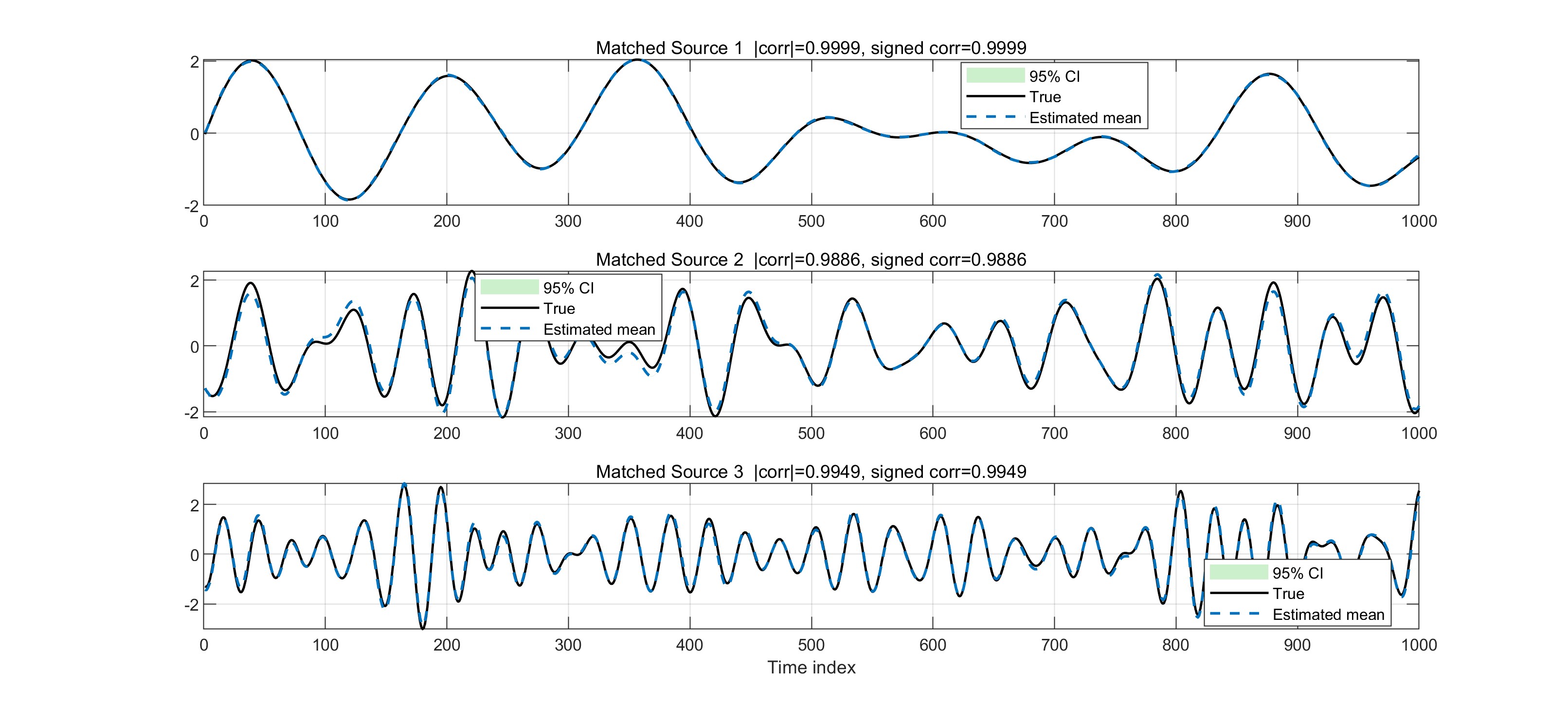}
    \caption{Linear mixing experiment: final matched source recovery results.}
    \label{fig:exp-linear-separation}
\end{figure*}

The training behavior for the same linear experiment is shown in Figure~\ref{fig:exp-linear-convergence}. The total loss, reconstruction term, GP term, diffusion term, and KL term all converge stably during optimization. The reconstruction MSE rapidly decreases to a very small level and remains low for the rest of training, indicating that the recovered sources and learned mixing map jointly explain the observations well. The correlation curves also rise quickly and remain close to one, showing that source separation emerges early and is then gradually refined during training.

The lower panels of Figure~\ref{fig:exp-linear-convergence} present the learned GP length-scales for the three source branches. Since the original time index $1{:}1{:}1000$ was normalized to $[0,1]$ before GP modeling, two versions are shown: the normalized length-scale and the rescaled length-scale in the original time unit. This presentation makes the learned temporal scales easier to interpret. The three source branches converge to different length-scales, which is consistent with the fact that the three artificial sources possess different temporal structures. 

\begin{figure*}[t]
    \centering
    \includegraphics[width=\textwidth]{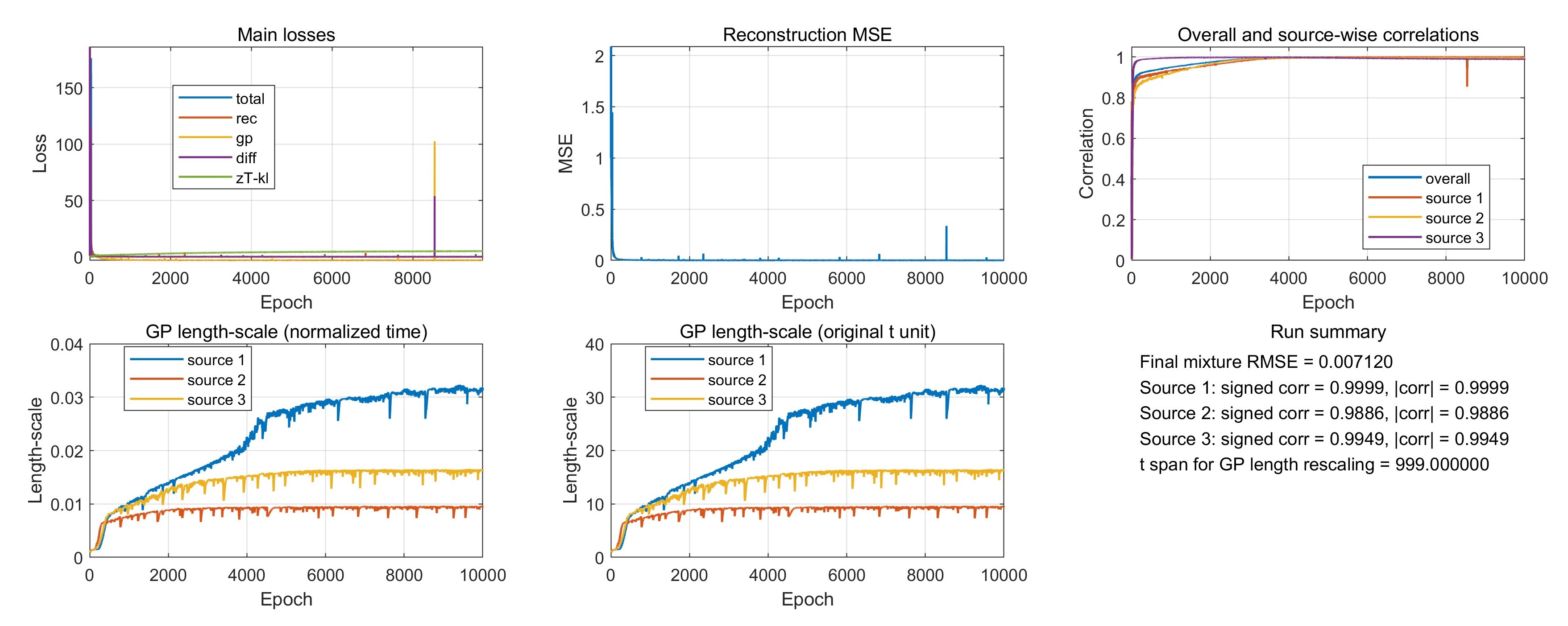}
    \caption{Linear mixing experiment: convergence of the main loss terms, reconstruction MSE, source-wise correlations, and learned GP length-scales.}
    \label{fig:exp-linear-convergence}
\end{figure*}

\subsection{Source-wise diffusion path analysis}

To further examine how the proposed reverse diffusion behaves inside each source branch, Figures~\ref{fig:exp-diff-start}--\ref{fig:exp-diff-final} visualize the diffusion trajectories at different training stages. In all three figures, each row corresponds to one source branch, and the columns show representative states along the reverse path, from the diffusion start state $\mathbf{x}_L$ to the final recovered state $\mathbf{x}_0$.

Figure~\ref{fig:exp-diff-start} shows the diffusion paths at the beginning of training. Since the reverse process has not yet learned the source structure, the trajectories at this stage still resemble Gaussian initial states. This is consistent with the intended construction of the model: the reverse diffusion starts from a source-wise Gaussian latent variable and only gradually learns how to map that initial random state toward a structured source trajectory.

\begin{figure*}[t]
    \centering
    \includegraphics[width=\textwidth]{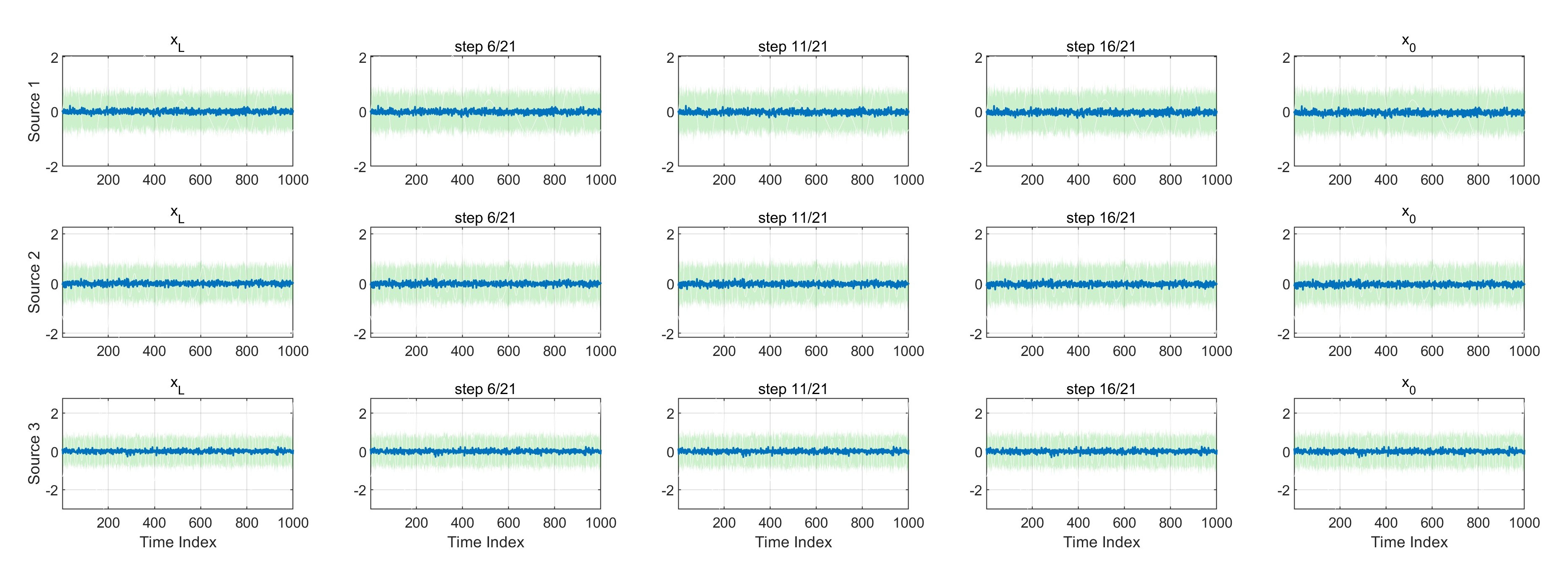}
    \caption{Reverse diffusion paths at the beginning of training.}
    \label{fig:exp-diff-start}
\end{figure*}

Figure~\ref{fig:exp-diff-mid} shows the corresponding diffusion paths around epoch 3000. By this stage, the reverse trajectories have already become much more structured and visibly closer to the target source shapes. At the same time, the Monte Carlo uncertainty has become very small, indicating that the learned reverse diffusion has already concentrated strongly around the recovered source trajectories. This suggests that the model does not merely memorize a final deterministic signal, but learns a stable source-wise generative path from $\mathbf{x}_L$ to $\mathbf{x}_0$.

\begin{figure*}[t]
    \centering
    \includegraphics[width=\textwidth]{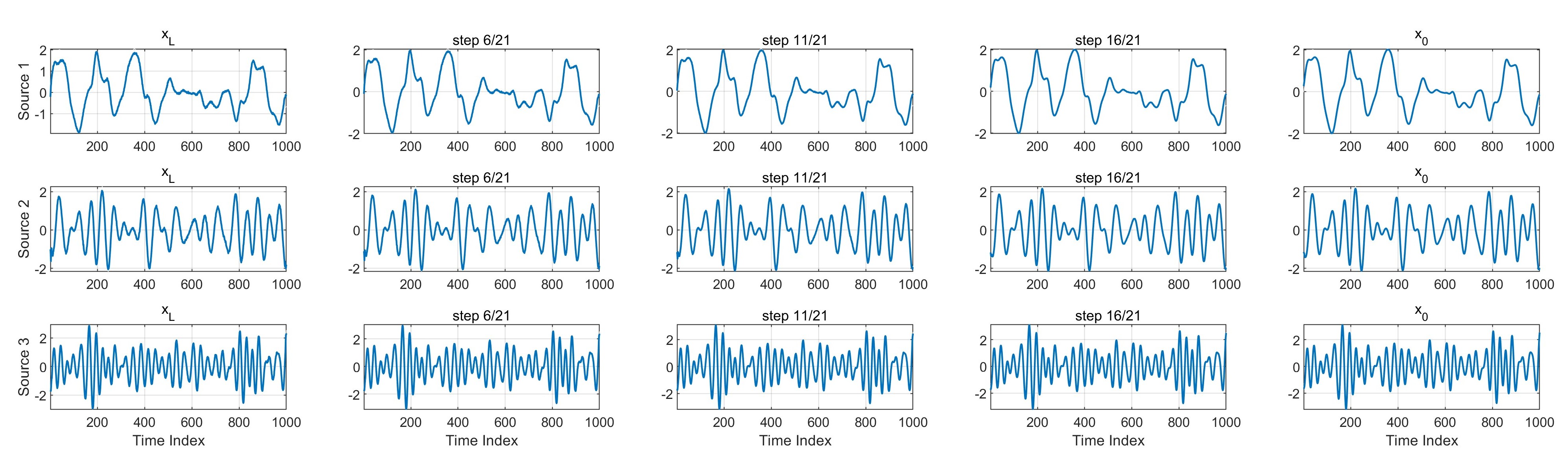}
    \caption{Reverse diffusion paths at an intermediate stage of training (epoch 3000).}
    \label{fig:exp-diff-mid}
\end{figure*}

Figure~\ref{fig:exp-diff-final} presents the diffusion paths at the final epoch. Compared with the initial stage, the reverse process now produces source trajectories that are highly structured from the beginning of the reverse path and remain stable across the sampled steps. The final state $\mathbf{x}_0$ matches the recovered sources well. Together, Figures~\ref{fig:exp-diff-start}--\ref{fig:exp-diff-final} illustrate the core mechanism of the proposed framework: each latent dimension is treated as a separate source branch, each branch owns its own adaptive reverse diffusion process, and the diffusion trajectories progressively organize themselves into distinct source-specific signal patterns during training.

\begin{figure*}[t]
    \centering
    \includegraphics[width=\textwidth]{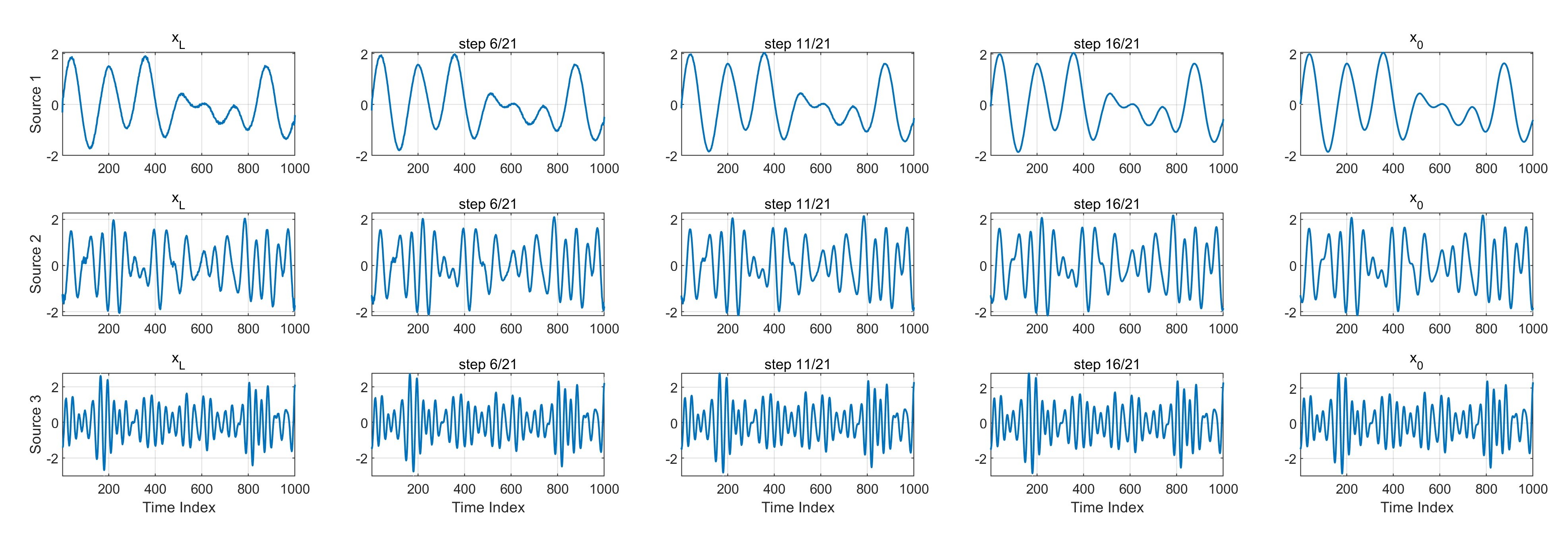}
    \caption{Reverse diffusion paths at the final epoch.}
    \label{fig:exp-diff-final}
\end{figure*}

\subsection{Nonlinear mixing results}

Figure~\ref{fig:exp-nonlinear-separation} shows the final recovered sources in the nonlinear mixing experiment. Compared with the linear case in Figure~\ref{fig:exp-linear-separation}, the nonlinear results remain satisfactory but are visibly less accurate. The recovered trajectories still follow the true source shapes well overall, but the matched correlations are lower than those in the linear experiment, and small local deviations can be observed more clearly in some segments.

\begin{figure*}[t]
    \centering
    \includegraphics[width=\textwidth]{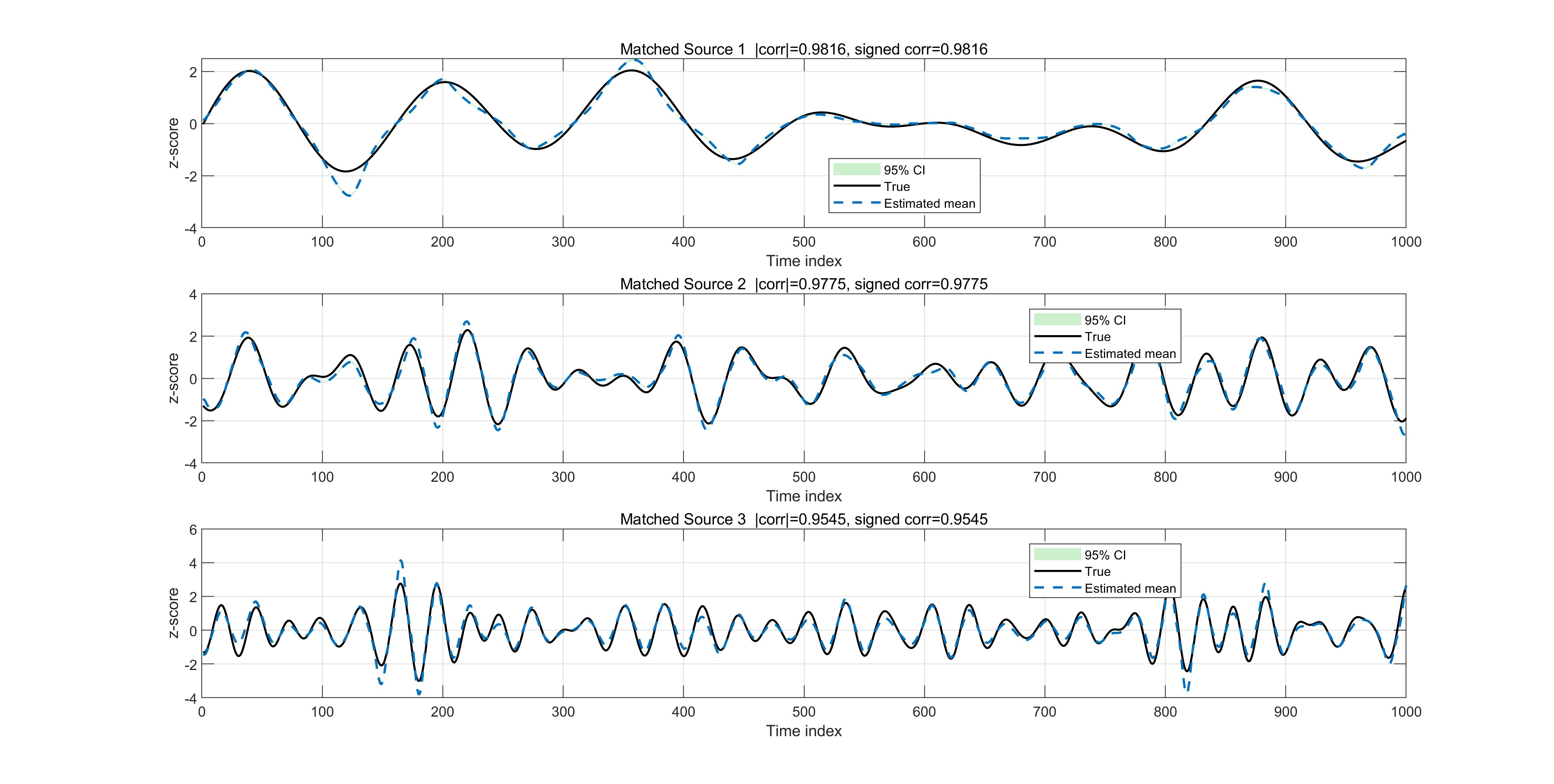}
    \caption{Nonlinear mixing experiment: final matched source recovery results.}
    \label{fig:exp-nonlinear-separation}
\end{figure*}

\subsection{Discussion}

Overall, the experiments support three main observations. First, in the linear setting, the proposed StrADiff framework achieves very strong separation performance, with highly accurate recovered sources, near-perfect correlations, and very small posterior uncertainty. Second, the learned GP length-scales differ across source branches, which is consistent with the use of heterogeneous temporal source priors and supports the source-wise modeling philosophy of the framework. Third, the diffusion-path visualizations provide direct evidence that the reverse diffusion mechanism is not merely an auxiliary loss term, but an active part of the source-generation process: as training proceeds, each source branch evolves from an approximately Gaussian initial state toward a structured and stable recovered source trajectory. In the nonlinear case, performance degrades moderately, but the framework still preserves meaningful source recovery ability.

\section{Conclusion}

This paper presented StrADiff, a structured source-wise adaptive diffusion framework for linear and nonlinear blind source separation. The method treats each latent dimension as a source branch and assigns to it an individual reverse diffusion mechanism, rather than relying on a single shared latent prior. In the present implementation, each branch is further regularized by its own adaptive Gaussian process prior, so that source estimation, source-wise latent regularization, diffusion learning, and mixture reconstruction are optimized jointly in one end-to-end unsupervised framework.

The results show that source-wise adaptive diffusion can provide a feasible structured generative route to blind source separation. In the linear case, the proposed framework achieves highly accurate source recovery, near-perfect matched correlations, and small Monte Carlo uncertainty. The learned GP length-scales differ across source branches, supporting the idea that different latent dimensions can adapt toward different temporal structures. The reverse diffusion path visualizations further suggest that the diffusion mechanism is not merely an auxiliary training loss, but an active source-generation process that progressively transforms source-wise Gaussian initial variables into structured latent trajectories. In the nonlinear case, the method still recovers meaningful source shapes, although the performance is less stable than in the linear setting.

Beyond the specific BSS experiments, the present framework also suggests a broader interpretation of ``sources''. In classical signal separation, a source usually denotes a physical or statistical signal component. In more complex latent-variable problems, however, the same source-wise formulation may be interpreted more generally: each branch can represent an independent, disentangled, or otherwise interpretable explanatory factor, provided that suitable structural assumptions or branch-specific priors are imposed. Under this view, BSS serves as a concrete and measurable testbed for examining whether source-wise diffusion branches can specialize into distinct latent roles during unsupervised training. The proposed formulation therefore has potential relevance not only to source separation, but also to structured representation learning, interpretable latent-variable modeling, and source-wise disentanglement.

\clearpage
\bibliography{ref}

\end{document}